\def\eqref#1{equation~\ref{#1}}
\def\1{\bm{1}}
\def\rvx{{\mathbf{x}}}
\def\vone{{\bm{1}}}
\def\vmu{{\bm{\mu}}}
\def\vtheta{{\bm{\theta}}}
\def\vc{{\bm{c}}}
\def\vh{{\bm{h}}}
\def\vm{{\bm{m}}}
\def\vt{{\bm{t}}}
\def\vx{{\bm{x}}}
\def\vy{{\bm{y}}}
\def\vz{{\bm{z}}}
\def\mA{{\bm{A}}}
\def\mF{{\bm{F}}}
\def\mI{{\bm{I}}}
\def\mL{{\bm{L}}}
\def\mP{{\bm{P}}}
\def\mQ{{\bm{Q}}}
\DeclareMathAlphabet{\mathsfit}{\encodingdefault}{\sfdefault}{m}{sl}
\SetMathAlphabet{\mathsfit}{bold}{\encodingdefault}{\sfdefault}{bx}{n}
\newcommand{\E}{\mathbb{E}}
\newcommand{\R}{\mathbb{R}}
\theoremstyle{plain}
\newtheorem{theorem}{Theorem}
\newtheorem{lemma}[theorem]{Lemma}
\newtheorem{definition}{Definition}
\crefname{section}{Sec.}{Secs.}
\Crefname{section}{Section}{Sections}
\Crefname{table}{Table}{Tables}
\crefname{table}{Tab.}{Tabs.}
\newcommand{\defeq}{%
    \stackrel{\text{def}}{=} 
}
\DeclareRobustCommand{\btheta}{\bm{\theta}}
\newcommand{\bmuL}{\bm{\mu}^L}
\newcommand{\brhoL}{\bm{\rho}^L}
\newcommand{\bthetaA}{\bm{\theta}^A}
\newcommand{\bthetaF}{\bm{\theta}^F}
\newcommand{\bthetaL}{\bm{\theta}^L}
\newcommand{\bthetaM}{\bm{\theta}^\mathcal{M}}
\newcommand{\bmF}{\bm{m}^F}
\newcommand{\bcF}{\bm{c}^F}
\newcommand{\dotx}{\dot{\bm{x}}}
\newcommand{\doty}{\dot{\bm{y}}}
\newcommand{\dotm}{\dot{\bm{m}}}
\def\vone{{\bm{1}}}
\def\vmu{{\bm{\mu}}}
\def\vtheta{{\bm{\theta}}}
\def\vc{{\bm{c}}}
\def\vh{{\bm{h}}}
\def\vm{{\bm{m}}}
\def\vt{{\bm{t}}}
\def\vx{{\bm{x}}}
\def\vy{{\bm{y}}}
\def\vz{{\bm{z}}}
\def\vL{{\bm{L}}}
\def\vA{{\bm{A}}}
\def\vR{{\bm{R}}}
\def\vQ{{\bm{Q}}}
\def\vI{{\bm{I}}}
\def\vF{{\bm{F}}}
\def\vrho{{\bm{\rho}}}
\def\calM{\mathcal{M}}
\def\calS{\mathcal{S}}
\def\calL{\mathcal{L}}
\DeclareMathOperator{\x}{\mathbf{x}}
\DeclareMathOperator{\y}{\mathbf{y}}
\renewcommand{\k}{\mathbf{k}}
\newcommand{\kl}[2]{D_{KL}\left(#1 \parallel #2\right)}
\newcommand{\N}[2]{\mathcal{N}\left(#1 , #2\right)}
\newcommand{\tidx}[2]{#1_{#2}}
\newcommand{\didx}[2]{#1^{(#2)}}
\renewcommand{\vec}[1]{\boldsymbol{#1}}
\newcommand{\pars}{\theta}
\newcommand{\parsn}{\vec{\pars}}
\newcommand{\parsnt}[1]{\tidx{\parsn}{#1}}
\newcommand{\alphat}[1]{\tidx{\alpha}{#1}}
\newcommand{\yd}{y}
\newcommand{\ydd}[1]{\didx{\yd}{#1}}
\newcommand{\oh}[2]{\mathbf{e}_{#1}}
\newcommand{\ds}[1]{\{1,#1\}}
\newcommand{\ui}[1]{U\ds{#1}}
\newcommand{\net}{\Psi}
\newcommand{\sender}[2]{p_{_S}\left(#1 \mid #2\right)}
\newcommand{\out}{p_{_O}}
\newcommand{\rec}{p_{_R}}
\newcommand{\flow}{p_{_F}}
\newcommand{\update}{p_{_U}}
\newcommand{\thetaM}{\btheta^\mathcal{M}}
\newcommand{\senderacc}{\alpha_1,\alpha_2,\dots,\alpha_i}
\newcommand{\predL}[1]{\hat{\net}_L(\thetaM_{#1}, t_{#1})}
\newcommand{\predF}[1]{\hat{\net}_F(\thetaM_{#1}, t_{#1})}
\newcommand{\predA}[1]{\hat{\net}_A(\thetaM_{#1}, t_{#1})}
\newcommand{\predM}[1]{\hat{\net}_M(\thetaM_{#1}, t_{#1})}
\def\mA{{\bm{A}}}
\def\mF{{\bm{F}}}
\def\mI{{\bm{I}}}
\def\mL{{\bm{L}}}
\def\mP{{\bm{P}}}
\def\mQ{{\bm{Q}}}
\newcommand{\modelname}{CrysBFN\xspace}
\title{A Periodic Bayesian Flow for Material Generation}
\author{Hanlin Wu$^{1*}$ \quad Yuxuan Song$^{1,2}$\thanks{Equal Contribution. Correspondence to Hao Zhou (zhouhao@air.tsinghua.edu.cn) and Yawen Ouyang (ouyangyawen@air.tsinghua.edu.cn). } \quad Jingjing Gong$^1$ \quad Ziyao Cao$^{1,2}$ \quad Yawen Ouyang$^1$ 
 \\\textbf{Jianbing Zhang}$^3$ \quad \textbf{Hao Zhou}$^{1}$ \quad \textbf{Wei-Ying Ma}$^{1}$ \quad \textbf{Jingjing Liu}$^{1}$\\
$^1$ Institute of AI Industry Research (AIR), Tsinghua University\\
$^2$ Dept. of Comp. Sci. \& Tech., Tsinghua University\\
$^3$ School of Artifcial Intelligence, Nanjing University \\
\texttt{wuhl24@mails.tsinghua.edu.cn},\\\texttt{\{songyuxuan,zhouhao,ouyangyawen,jjliu\}@air.tsinghua.edu.cn} \\
}
\begin{document}

\maketitle

\begin{abstract}
Generative modeling of crystal data distribution is an important yet challenging task due to the unique periodic physical symmetry of crystals. Diffusion-based methods have shown early promise in modeling crystal distribution. 
More recently, Bayesian Flow Networks were introduced to aggregate noisy latent variables, resulting in a variance-reduced parameter space that has been shown to be advantageous for modeling Euclidean data distributions with structural constraints \citep{song2023unified}.
Inspired by this, we seek to unlock its potential for modeling variables located in non-Euclidean manifolds \textit{e.g.} those within crystal structures, by overcoming challenging theoretical issues. We introduce CrysBFN, a novel crystal generation method by proposing a periodic Bayesian flow, which essentially differs from the original Gaussian-based BFN by exhibiting non-monotonic entropy dynamics. To successfully realize the concept of periodic Bayesian flow, CrysBFN integrates a new entropy conditioning mechanism and empirically demonstrates its significance compared to time-conditioning. Extensive experiments over both crystal ab initio generation and crystal structure prediction tasks demonstrate the superiority of CrysBFN, which consistently achieves new state-of-the-art on all benchmarks. Surprisingly, we found that CrysBFN enjoys a significant improvement in sampling efficiency, \emph{e.g.}, $\sim$ 100$\times$ speedup (10 \textit{v.s.} 2000 steps network forwards) compared with previous diffusion-based methods on MP-20 dataset. Code is available at \url{https://github.com/wu-han-lin/CrysBFN}.

\end{abstract}

\section{Introduction}

% Discovering new materials with desired properties has been a challenging but valuable problem while the conventional approach to discovering materials is inherently slow, constrained by the throughput of human-centered workflow \cite{material_dicovery,wang2023scientific}.
% Generative modeling of crystals has been recognized as a promising direction for discovering new functional materials \cite{material_dicovery,wang2023scientific}.  Accurately modeling the crystal data distribution and capturing the underlying intricate rules of crystal data distribution could contribute a lot to downstream applications such as material property prediction, crystal structure prediction, and inverse design \cite{peng2022human, yao2021inverse}. 

Deep generative models, with their strong ability to approximate data distribution with complex geometries, have recently emerged as a promising approach to de novo drug design~\citep{hoogeboom2022equivariant}, protein engineering~\citep{shi2022protein}, and material science~\citep{material_dicovery}.  %In this work, we focus on the task of generative modeling of crystals which is considered a promising direction for discovering new functional materials~\cite{wang2023scientific,peng2022human}. 
% The symmetry in crystal data distribution is essentially different from the structures in other scientific domains, such as molecules and proteins.
% This is, the desired density function should hold the property of periodic translational invariant and 
%Recent advancements in machine learning have paved the way for the application of generative models to this task (Nouira et al., 2018; Hoffmann et al., 2019; Hu et al., 2020; Ren et al., 2021). Among
% various strategies, diffusion models have been exhibited to be particularly effective in generating
% realistic and diverse crystal structures (Xie et al., 2021; Jiao et al., 2023). These methods leverage a
% stochastic process to gradually transform a random initial state into a stable distribution, effectively
% capturing the complex landscapes of crystal structures.
To discover new functional materials~\citep{wang2023scientific,peng2022human}, there has been an active line of research on crystal generative modeling~\citep{REN2021,hoffmann2019data,noh2019inverse, court20203,yang2023scalable,nouira2018crystalgan}. Recent diffusion-based models learns through an iterative reverse process with multi-level noise perturbation, and has been demonstrated as a powerful tool for capturing complex geometries of crystals. Studies show that these models can generate crystal samples with realistic structures that well satisfy physical constraint~\citep{xie2021crystal,jiao2023crystal,jiao2024space,equicsp}.

%Though with superior performance over previous methods, there are still negligible challenges in diffusion-based approaches over crystal generative modeling. 
% Despite promising results, two key challenges remain.
% Firstly, diffusion models decompose data samples in a coarse-to-fine manner with multi-level noises to provide dense supervision signals for training. 
% However, crystal structure is sensitive to noise perturbation, and even a small amount of noise over the lattice bases or fractional coordinates can destroy the inherent structural information of the crystal, rendering most steps over the forward diffusion trajectory totally chaotic. % and leash the key spirit of information decomposition of the iterative refinement \jj{What does this sentence mean? Unclear}; 
% Secondly, diffusion-based approaches~\citep{jiao2023crystal,equicsp} for crystal structure modeling tend to learn the score function of wrapped normal distribution for periodic variables, where the approximation of a sum of infinite terms is needed which could bring in extra bias. 
% % diffusion-based methods~\citep{jiao2023crystal} learn a score function of wrapped normal distribution for periodic variables, which is essentially a sum of infinite terms and brings in extra bias due to the numerical approximation. %To tackle the abovementioned issues, and be 
% \yuxuan{add accuracy conditioning and emphasize contribution.}

Despite promising results, significant challenges persist. The search space for crystal structures grows exponentially with the number of atoms, while thermodynamically stable materials represent only a small fraction~\citep{flowmm}. This presents challenges in the multi-step generation process, the variance of which might cause structures to deviate from stable distributions. Moreover, the current widely-adapted diffusion-based approaches~\citep{jiao2023crystal,jiao2024space} for crystal structure modeling tend to learn the score function of wrapped normal distribution for periodic variables, where the approximation of a sum of infinite terms is needed which could bring in extra bias. Recently, BFN~\citep{bfn} has been successfully applied to the geometry generative modeling of molecules~\citep{song2023unified}, a scenario that shares the above-mentioned challenges similar to crystal generation, by modeling in a much lower variance parameter space. However, the periodic geometry of crystals differs from that of small molecules and raises significant challenges.

To tackle these challenges, this paper aim to break the barrier of extending the paradigm of BFN into those variables located non-Euclidean space, \textit{e.g.}, atom fractional coordinates in crystal structure. We introduce the first non-Euclidean Bayesian flow over the periodic space, \textit{i.e.} the hyper-torus. To successfully implement such concept, we introduce a generalized training paradigm based on simulation of the Bayesian flow and further propose a non-auto-regressive equivalent formulation of Bayesian flow distribution that guarantees computational efficiency. By integrating all these innovations, we introduce \modelname, the first periodic E(3) equivariant Bayesian flow network designed for crystal generation. Extensive experiments demonstrate the significant superiority of \modelname over current methods in both sampling quality and efficiency.

Our contributions can be summarized as follows:
\begin{itemize}
    \item We present the first periodic Bayesian flow in non-Euclidean space (hyper-torus) with a novel training paradigm and entropy conditioning mechanism tackling the unprecedented and pivotal non-additive accuracy theoretical challenge.
    % conditioning on entropy parameter instead of time, tackling the unprecedented and pivotal non-additive accuracy problem theoretically and practically.
    \item We introduce the first periodic-E(3) equivariant Bayesian flow networks for crystal generation tasks with appealing theoretical guarantees.
    \item Extensive experiments demonstrate that \modelname consistently outperforms previous methods on both ab initio crystal generation (99.1\% COV-P on Carbon-24) and crystal structure prediction tasks (64.35$\%$ match rate on MP-20). Efficiency experiments on MP-20 prove that \modelname enjoys a \textbf{$\sim$ 100$\times$ sampling efficiency} with performance on par with previous Diffusion-based methods.
\end{itemize}

\begin{figure}[t]
    \vskip -0.2in
    \centering
    \includegraphics[width=\textwidth]{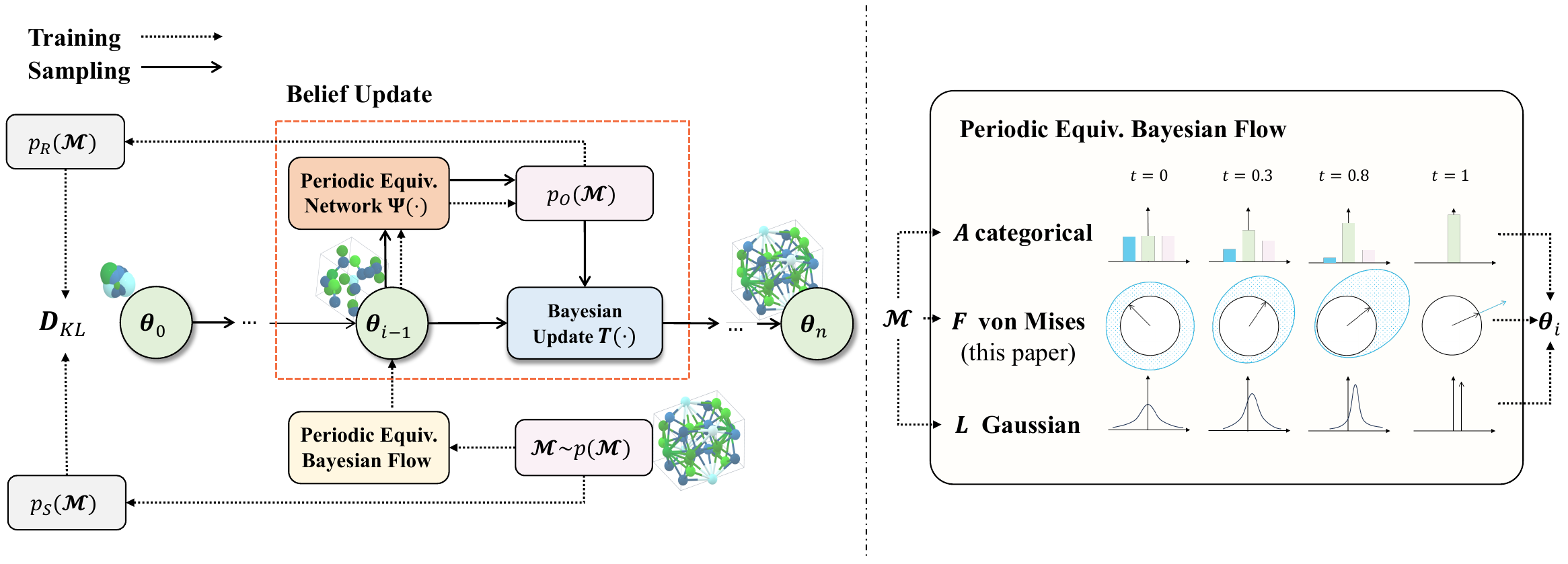}
    \caption{\textbf{Framework of CrysBFN}. Left: overview of training and sampling process. At training time, the network receives $\btheta_{i-1}$ from Bayesian flow based on data distribution, and tries to improve the belief $\btheta_{i-1}$ over the groundtruth $\calM$ by outputting an estimated distribution $p_O$ and minimizing the gap between estimation and groundtruth. At sampling time with the trained network, the uninformative prior $\btheta_0$ is gradually improved by belief updates until $\btheta_n$ with high fidelity. Right: illustration of the  periodic equivariant Bayesian flow.}
    \label{fig:framework}
    \vskip -0.2in
\end{figure}

\section{Related Work}

Modeling and generating stable materials with data-driven approaches has been applied to discovering new functional materials \citep{peng2022human}. One line of approaches indirectly models crystal space by transforming crystals into human-designed representations \citep{REN2021,hoffmann2019data,noh2019inverse, court20203,yang2023scalable}, though the encoding and decoding process often leads to physical geometry loss 
%(\rebuttal{detailed discussion in \cref{appd:detailed_related_works}}). 
In contrast, another line of research directly models crystals in the sample space, drawing inspiration from the success of Diffusion models~\citep{ho2020denoising,song2020score,song2019generative}. For instance, CD-VAE \citep{xie2021crystal} and SyMat \citep{luo2024towards} employ score-matching \citep{song2019generative} to learn scores for generating stable materials, while their modeled distribution lacks geometric invariance~\citep{zhang2023artificial}. DiffCSP~\citep{jiao2023crystal} addresses this by transforming Cartesian atom coordinates into fractional coordinates, introducing the periodic E(3) equivariance of crystals, and designing an equivariant diffusion crystal generation model based on periodic diffusion \citep{jing2022torsional}. More recently, \citet{flowmm} applies Riemannian Flow Matching \citep{riemannianfm} to crystal generation tasks offering improved sampling efficiency, while at the expense of quality \footnote{For a more detailed discussion of related work, please refer to \cref{appd:detailed_related_works}.}. However, we argue that these methods struggle to balance sampling quality and efficiency due to insufficient guidance during each transition from the noise prior to the data distribution. This issue is particularly pronounced for crystals, where thermodynamically stable materials constitute only a small fraction of the search space~\citep{flowmm}. For instance, early generation states $x_{t-1}$ with low confidence should be retained less than later states to achieve the next state $x_{t}$.

In this work, we propose to use BFN~\citep{bfn} to model crystals in a principally different way. BFN provides a framework to accurately update each generation state according to its entropy/confidence, the effectiveness of which has been proved in \citet{song2023unified}. 
However, there are no established explorations on the challenging topic of non-Euclidean BFN which is essential to many real-world applications \cite{jing2022torsional, AlphaFold2021}. 
To address the above issues, in this paper, we build a non-Euclidean Bayesian flow from scratch, identifying and tackling the non-additive accuracy via introducing a novel entropy conditioning mechanism.
 
% }

\section{Preliminaries}\label{sec:preliminaries}

\textbf{Crystal Representation and Related Manifold} Crystals can be represented as a structure composed of infinite, periodic and repeating unit cells defined by a triplet $\mathcal{M}=(\boldsymbol{A},\boldsymbol{F},\boldsymbol{L})$. Denote the number of atoms in the unit cell as $N$, $\boldsymbol{A}=(\vec{a}_1,\vec{a}_2,\dots,\vec{a}_N)\in \mathcal{S}^{K\times N}$ is the representation of atom types with the length of vocabulary $K$, and every such one-hot discrete variable locates in the simplex $\mathcal{S}^K$ represented by:
\begin{equation}
\mathcal{S}^K\defeq\{\boldsymbol{s}\in \mathbb{R}^K|\sum_{i=1}^{K}s_i=1,s_i\geq0, i=1,\dots,K\}
\end{equation} 
which requires the designed generative path for $\boldsymbol{A}$ should be well defined on the simplex.
Following \citet{jiao2023crystal}, $\boldsymbol{F}=[\boldsymbol{f}_1,\boldsymbol{f}_2,\dots,\boldsymbol{f}_N]\in [0,1)^{3\times N}$ is the fractional coordinates of atoms located in quotient space $\R^{3\times N} / \mathbb{Z}^{3\times N}$ equivalent to the hypertorus $\mathbb{T}^{3\times N}$~\citep{jing2022torsional}. The hypertorus $\mathbb{T}^{3\times N}$ can be represented as the Cartesian product of  $3\times N$ toruses $\mathbb{T}^1$:
\begin{equation}
    \mathbb{T}^1\defeq \{\vz\in\mathbb{R}^2:||\vz||=1\}
\end{equation}
% \oyyw{$\mathbb{T}^1$ denotes 1 
%  or 2 dimension here?}\hanlin{1 degree of freedom}
And $\boldsymbol{L}=[\boldsymbol{l}_1,\boldsymbol{l}_2,\boldsymbol{l}_3]\in \mathbb{R}^{3\times 3}$ denotes the lattice matrix, every column vector of which is the periodic basic vector of the crystal. We can get the Cartesian coordinates representation of unit cell's atom coordinates $\bm{X}$ by $\boldsymbol{X}=\boldsymbol{L}\boldsymbol{F}\in \mathbb{R}^{3\times N}$. The ideal infinite periodic crystal structure of $\mathcal{M}$ can be represented by $\{(a_i',\boldsymbol{x}_i')|a_i'=a_i,\boldsymbol{x}_i'=\boldsymbol{x}_i+\boldsymbol{L}\boldsymbol{k},\forall\boldsymbol{k}\in \mathbb{Z}^{3\times 1}\}$. Based on the above notations, the symmetry of crystal geometry is defined as \emph{periodic E(3) invariance}\footnote{Actually some crystal structures are chiral \citep{flack2003chiral} and their structures are SE(3) invariant. In this paper, we follow \citet{jiao2023crystal} to build an E(3) invariant framework for fair comparison, while the SE(3) equivariance can be simply achieved by transforming an E(3) invariant network into an SE(3) equivariant network by excluding invariance to reflections in the Markov chain. } \citep{jiao2023crystal}, including periodic translational invariance of $\vF$ and rotational invariance of $\vL$ (details in \cref{appd:geometric_invar}) . 
% We define the space of $\mathcal{M}$ as the Cartesian product of the space of each modality over all possible $N$ 
% \begin{equation}
%     \mathcal{M} \in \mathcal{C} \defeq \{\mathcal{S}^{K\times N}\times \mathbb{T}^{3\times N}\times \mathbb{R}^{3\times 3}, N = 1,2,\dots,\infty\}
% \end{equation} which we aim to model in this paper. 

\textbf{Bayesian Flow Networks}
% \yuxuan{Generally this part is long and lacks of content. I will rewrite this part.}
% As a new class of generative model, Bayesian Flow Networks (BFN) \citep{bfn} models the generation process of data $\bold{x}$ as a message transmission process between sender and receiver.
Different from the well-established SDE-based approaches, \emph{e.g.} Diffusion Models~\citep{ddpm,sde,song2020score} and ODE-based approaches \emph{e.g.} Flow Matching~\citep{flowmatching}, Bayesian Flow Networks define a generative process driven by consecutive Bayesian updates on noised samples from the uninformative prior distribution $\vtheta_{0}$ to posteriors $\vtheta_i$ with higher confidence and more information.

To define the consecutive Bayesian update process, Bayesian Flow Networks contain a process to add noise to the clean data samples which is an analogy to the forward process in Diffusion Models. In BFN, such a process is explicitly defined by the so-called \emph{sender distribution} $p_S(\mathbf{y}|\mathbf{x};\alpha)$ where $\alpha$ is the parameter of sender distribution which corresponds to noise level, \emph{e.g.} variance for Gaussian-formed $p_S$. 

 % BFN tends to create a procedure that gradually gains the information of ground truth data $\rvx$ for providing the training signal. 
 BFN aims to create a procedure that gradually acquires information from the ground truth data \(\rvx\) to provide a training signal. To this end, the framework will sample a series of noisy samples $\bold{y}_1, \bold{y}_2, ..., \bold{y}_n$  independently from the \emph{sender distribution} $p_S$ with various accuracy levels $\alpha_1, \alpha_2, ..., \alpha_n$. Based on samples, the framework would simulate an auto-regressive update of $\vtheta$ based on the \emph{Bayesian update function} to reflect an information gathering process from prior to data as:
\begin{align}
    \label{eq:bayesian_update}
    \vtheta_i = h(\boldsymbol{\theta}_{i-1},\bold{y}_{i},\alpha_{i})
\end{align}

The neural network $\Psi$ is trained in a teacher-forcing fashion: approximate the sender distribution $p_S(\cdot| \bold{x};\alpha_i)$ generating $\bold{y}_i$ according to $\btheta_{i-1}$, which has integrated the information of $(\bold{y}_1, \cdots, \bold{y}_{i-1})$ through \cref{eq:bayesian_update}. The network $\Psi$ hence takes  $\vtheta$ obtained by Bayesian update as input, and the distribution implied by $p_I(\cdot|\vtheta)$ is termed \emph{input distribution}. The network output $\Psi(\vtheta)$ is interpreted as the parameter of an updated distribution $p_O(\cdot|\Psi(\vtheta))$ over sample space, referred to as \emph{output distribution}. Combining the network output $\Psi(\vtheta)$ and the sender distribution, we obtain the approximation to $p_S\left(\bold{y}_i \mid \bold{x} ; \alpha_i\right)$ as:
\begin{equation}
\label{eq:abs_rece}
    p_R(\bold{y}_i\mid \vtheta_{i-1},\Psi,\alpha_i)= \mathbb{E}_{p_O(\bold{x}' |\Psi(\boldsymbol{\vtheta}_{i-1}))}p_S\left(\bold{y}_i \mid \bold{x}' ; \alpha_i\right)
\end{equation}
Such distribution is named as \emph{receiver distribution} $p_R$. Note that $\vtheta_{i-1}$ can be seen as a deterministic function mapping, given $(\bold{y}_1,\cdots,\bold{y}_{i-1})$  as $\vtheta_{i-1} = f(\bold{y}_1,\cdots,\bold{y}_{i-1})$ based on the deterministic update function in \cref{eq:bayesian_update}. By combining the objectives of different timesteps and taking expectation over trajectories, we obtain the training objective of BFN as:
\begin{align}\label{eq:loss_n}
    \mathcal{L}(\Psi) = \mathbb{E}_{\x \sim p_{\text{data}}}\mathbb{E}_{\Pi_{i=1}^n p_S(\bold{y}_i|\bold{x},\alpha_i)}  D_{K L} (p_S(\bold{y}_i|\bold{x},\alpha_i)||p_R(\bold{y}_i\mid \vtheta_{i-1},\Psi,\alpha_i))
\end{align}
Here $p_{\text{data}}$ is the empirical distribution. Detailed derivation and discussion are provided in \cref{appd:bfn_cir}. Additionally, we provide toy examples with minimal components illustrating how BFNs work in our code repository.
\section{Method}\label{sec:method}
\begin{figure}
    \centering
    \includegraphics[width=0.85\textwidth]{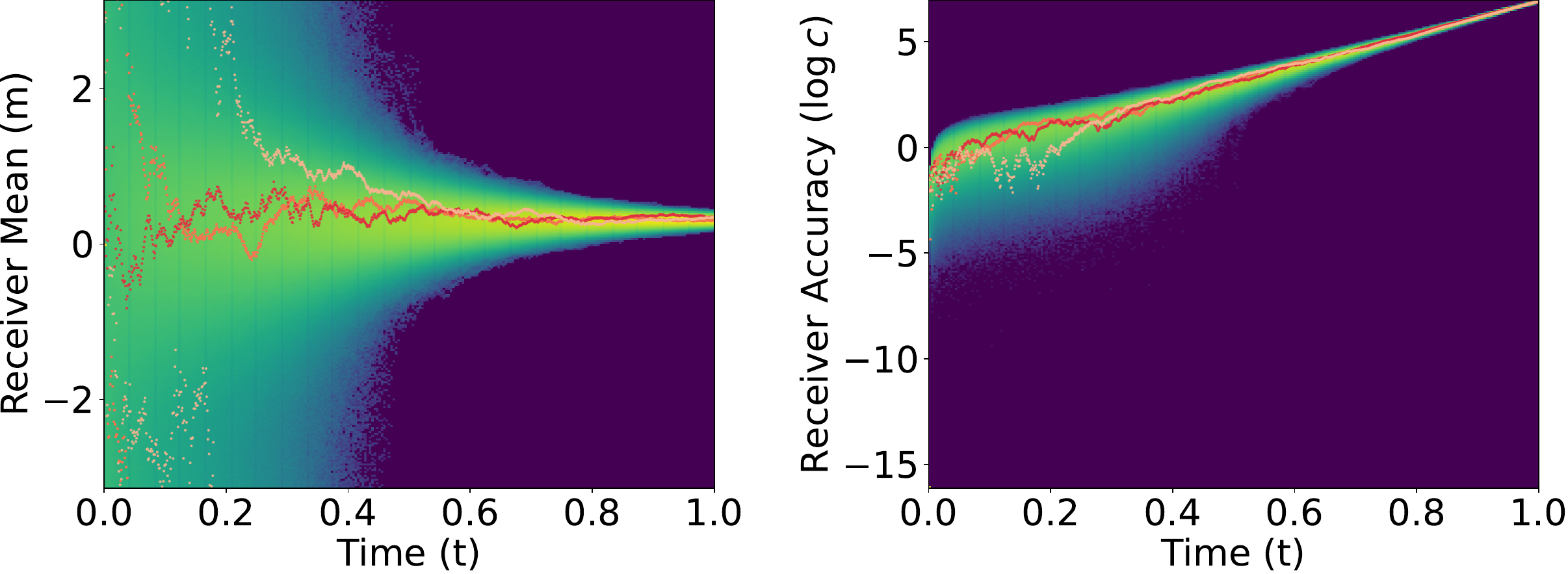}
    \caption{\textbf{Visualization of the proposed periodic Bayesian flow with mean parameter $\mu$ and accumulated accuracy parameter $c$ which corresponds to the entropy/uncertainty}. For $x = 0.3, \beta(1) = 1000$ and $\alpha_i$ defined in \cref{appd:bfn_cir}, this figure plots three colored stochastic parameter trajectories for receiver mean parameter $m$ and accumulated accuracy parameter $c$, superimposed on a log-scale heatmap of the Bayesian flow distribution $p_F(m|x,\senderacc)$ and $p_F(c|x,\senderacc)$. Note the \emph{non-monotonicity} and \emph{non-additive} property of $c$ which could inform the network the entropy of the mean parameter $m$ as a condition and the \emph{periodicity} of $m$. %\jj{Shrink the figures to save space}\hanlin{Do we need to make this figure one-column?}
    }
    \label{fig:vmbf_vis}
    \vskip -0.1in
\end{figure}
% \begin{wrapfigure}{r}{0.5\textwidth}
%     \centering
%     \includegraphics[width=0.49\textwidth]{imgs/heatmap_acc.pdf}
%     \caption{\textbf{Visualization of hyper-torus Bayesian flow based on von Mises Distribution}. For $x = 0.3, \beta(1) = 1000$ and $\alpha_i$ defined in \cref{appd:bfn_cir}, this figure plots three colored stochastic parameter trajectories for receiver mean parameter $m$ and accumulated accuracy parameter $c$, superimposed on a log-scale heatmap of the Bayesian flow distribution $p_F(m|x,\senderacc)$ and $p_F(c|x,\senderacc)$. Note the \emph{non-monotonicity} and \emph{non-additive} property of $c$. \jj{Shrink the figures to save space}}
%     \label{fig:vmbf_vis}
%     \vspace{-30pt}
% \end{wrapfigure}

In this section, we explain the detailed design of CrysBFN tackling theoretical and practical challenges. First, we describe how to derive our new formulation of Bayesian Flow Networks over hyper-torus $\mathbb{T}^{D}$ from scratch. Next, we illustrate the two key differences between \modelname and the original form of BFN: $1)$ a meticulously designed novel base distribution with different Bayesian update rules; and $2)$ different properties over the accuracy scheduling resulted from the periodicity and the new Bayesian update rules. Then, we present in detail the overall framework of \modelname over each manifold of the crystal space (\textit{i.e.} fractional coordinates, lattice vectors, atom types) respecting \textit{periodic E(3) invariance}. 

% In this section, we first demonstrate how to build Bayesian flow on hyper-torus $\mathbb{T}^{D}$ by overcoming theoretical and practical problems to provide a low-noise parameter-space approach to fractional atom coordinate generation. Next, we present how \modelname models each manifold of crystal space respecting \textit{periodic E(3) invariance}. 

\subsection{Periodic Bayesian Flow on Hyper-torus \texorpdfstring{$\mathbb{T}^{D}$}{}} 
For generative modeling of fractional coordinates in crystal, we first construct a periodic Bayesian flow on \texorpdfstring{$\mathbb{T}^{D}$}{} by designing every component of the totally new Bayesian update process which we demonstrate to be distinct from the original Bayesian flow (please see \cref{fig:non_add}). 
 %:) 
 
 The fractional atom coordinate system \citep{jiao2023crystal} inherently distributes over a hyper-torus support $\mathbb{T}^{3\times N}$. Hence, the normal distribution support on $\R$ used in the original \citep{bfn} is not suitable for this scenario. 
% The key problem of generative modeling for crystal is the periodicity of Cartesian atom coordinates $\vX$ requiring:
% \begin{equation}\label{eq:periodcity}
% p(\vA,\vL,\vX)=p(\vA,\vL,\vX+\vec{LK}),\text{where}~\vec{K}=\vec{k}\vec{1}_{1\times N},\forall\vec{k}\in\mathbb{Z}^{3\times1}
% \end{equation}
% However, there does not exist such a distribution supporting on $\R$ to model such property because the integration of such distribution over $\R$ will not be finite and equal to 1. Therefore, the normal distribution used in \citet{bfn} can not meet this condition.

To tackle this problem, the circular distribution~\citep{mardia2009directional} over the finite interval $[-\pi,\pi)$ is a natural choice as the base distribution for deriving the BFN on $\mathbb{T}^D$. 
% one natural choice is to 
% we would like to consider the circular distribution over the finite interval as the base 
% we find that circular distributions \citep{mardia2009directional} defined on a finite interval with lengths of $2\pi$ can be used as the instantiation of input distribution for the BFN on $\mathbb{T}^D$.
Specifically, circular distributions enjoy desirable periodic properties: $1)$ the integration over any interval length of $2\pi$ equals 1; $2)$ the probability distribution function is periodic with period $2\pi$.  Sharing the same intrinsic with fractional coordinates, such periodic property of circular distribution makes it suitable for the instantiation of BFN's input distribution, in parameterizing the belief towards ground truth $\x$ on $\mathbb{T}^D$. 
% \yuxuan{this is very complicated from my perspective.} \hanlin{But this property is exactly beautiful and perfectly fit into the BFN.}

\textbf{von Mises Distribution and its Bayesian Update} We choose von Mises distribution \citep{mardia2009directional} from various circular distributions as the form of input distribution, based on the appealing conjugacy property required in the derivation of the BFN framework.
% to leverage the Bayesian conjugacy property of von Mises distribution which is required by the BFN framework. 
That is, the posterior of a von Mises distribution parameterized likelihood is still in the family of von Mises distributions. The probability density function of von Mises distribution with mean direction parameter $m$ and concentration parameter $c$ (describing the entropy/uncertainty of $m$) is defined as: 
\begin{equation}
f(x|m,c)=vM(x|m,c)=\frac{\exp(c\cos(x-m))}{2\pi I_0(c)}
\end{equation}
where $I_0(c)$ is zeroth order modified Bessel function of the first kind as the normalizing constant. Given the last univariate belief parameterized by von Mises distribution with parameter $\theta_{i-1}=\{m_{i-1},\ c_{i-1}\}$ and the sample $y$ from sender distribution with unknown data sample $x$ and known accuracy $\alpha$ describing the entropy/uncertainty of $y$,  Bayesian update for the receiver is deducted as:
\begin{equation}
 h(\{m_{i-1},c_{i-1}\},y,\alpha)=\{m_i,c_i \}, \text{where}
\end{equation}
\begin{equation}\label{eq:h_m}
m_i=\text{atan2}(\alpha\sin y+c_{i-1}\sin m_{i-1}, {\alpha\cos y+c_{i-1}\cos m_{i-1}})
\end{equation}
\begin{equation}\label{eq:h_c}
c_i =\sqrt{\alpha^2+c_{i-1}^2+2\alpha c_{i-1}\cos(y-m_{i-1})}
\end{equation}
The proof of the above equations can be found in \cref{apdx:bayesian_update_function}. The atan2 function refers to  2-argument arctangent. Independently conducting  Bayesian update for each dimension, we can obtain the Bayesian update distribution by marginalizing $\y$:
\begin{equation}
p_U(\vtheta'|\vtheta,\bold{x};\alpha)=\mathbb{E}_{p_S(\bold{y}|\bold{x};\alpha)}\delta(\vtheta'-h(\vtheta,\bold{y},\alpha))=\mathbb{E}_{vM(\bold{y}|\bold{x},\alpha)}\delta(\vtheta'-h(\vtheta,\bold{y},\alpha))
\end{equation} 
\begin{figure}
    \centering
    \vskip -0.15in
    \includegraphics[width=0.95\linewidth]{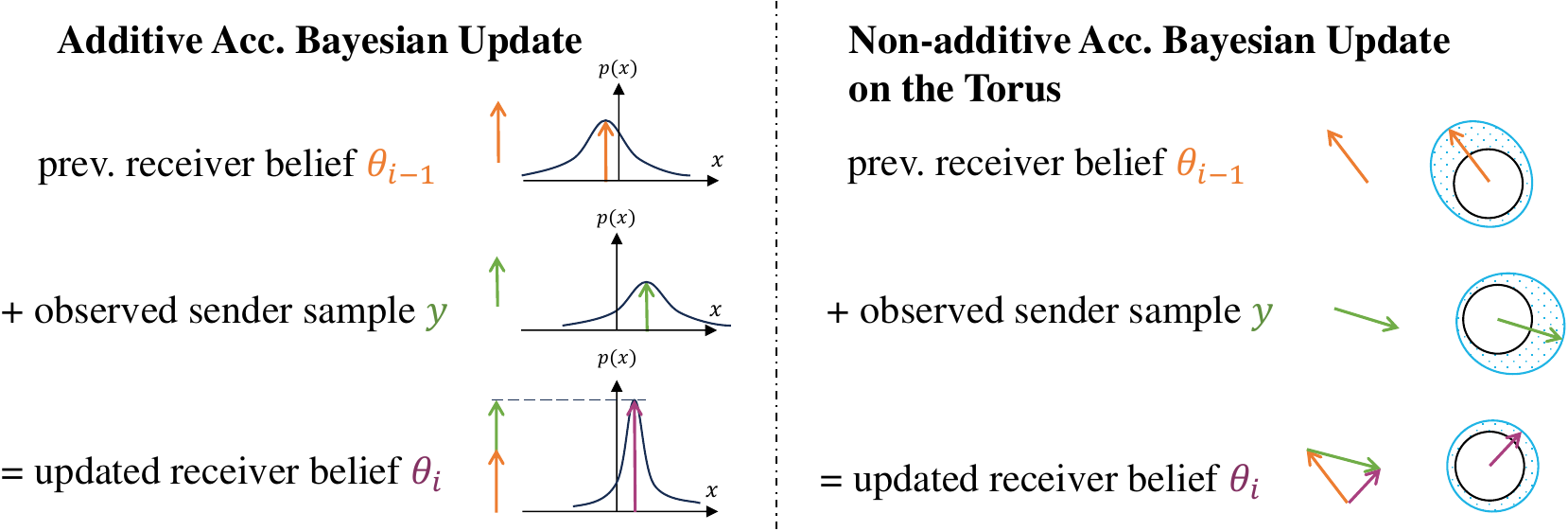}
    \caption{An intuitive illustration of non-additive accuracy Bayesian update on the torus. The lengths of arrows represent the uncertainty/entropy of the belief (\emph{e.g.}~$1/\sigma^2$ for Gaussian and $c$ for von Mises). The directions of the arrows represent the believed location (\emph{e.g.}~ $\mu$ for Gaussian and $m$ for von Mises).}
    \label{fig:non_add}
    \vskip -0.15in
\end{figure}
\textbf{Non-additive Accuracy} 
The additive accuracy is a nice property held with the Gaussian-formed sender distribution of the original BFN expressed as:
\begin{align}
\label{eq:standard_id}
    \update(\parsn{}'' \mid \parsn{}, \x; \alpha_a+\alpha_b) = \E_{\update(\parsn{}' \mid \parsn{}, \x; \alpha_a)} \update(\parsn{}'' \mid \parsn{}', \x; \alpha_b)
\end{align}
Such property is mainly derived based on the standard identity of Gaussian variable:
\begin{equation}
X \sim \mathcal{N}\left(\mu_X, \sigma_X^2\right), Y \sim \mathcal{N}\left(\mu_Y, \sigma_Y^2\right) \Longrightarrow X+Y \sim \mathcal{N}\left(\mu_X+\mu_Y, \sigma_X^2+\sigma_Y^2\right)
\end{equation}
The additive accuracy property makes it feasible to derive the Bayesian flow distribution $
p_F(\boldsymbol{\theta} \mid \mathbf{x} ; i)=p_U\left(\boldsymbol{\theta} \mid \boldsymbol{\theta}_0, \mathbf{x}, \sum_{k=1}^{i} \alpha_i \right)
$ for the simulation-free training of \cref{eq:loss_n}.
It should be noted that the standard identity in \cref{eq:standard_id} does not hold in the von Mises distribution. Hence there exists an important difference between the original Bayesian flow defined on Euclidean space and the Bayesian flow of circular data on $\mathbb{T}^D$ based on von Mises distribution. With prior $\btheta = \{\bold{0},\bold{0}\}$, we could formally represent the non-additive accuracy issue as:
% The additive accuracy property implies the fact that the "confidence" for the data sample after observing a series of the noisy samples with accuracy ${\alpha_1, \cdots, \alpha_i}$ could be  as the accuracy sum  which could be  
% Here we 
% Here we emphasize the specific property of BFN based on von Mises distribution.
% Note that 
% \begin{equation}
% \update(\parsn'' \mid \parsn, \x; \alpha_a+\alpha_b) \ne \E_{\update(\parsn' \mid \parsn, \x; \alpha_a)} \update(\parsn'' \mid \parsn', \x; \alpha_b)
% \end{equation}
% \oyyw{please check whether the below equation is better}
% \yuxuan{I fill somehow confusing on what is the update distribution with $\alpha$. }
% \begin{equation}
% \update(\parsn{}'' \mid \parsn{}, \x; \alpha_a+\alpha_b) \ne \E_{\update(\parsn{}' \mid \parsn{}, \x; \alpha_a)} \update(\parsn{}'' \mid \parsn{}', \x; \alpha_b)
% \end{equation}
% We give an intuitive visualization of such difference in \cref{fig:non_add}. The untenability of this property can materialize by considering the following case: with prior $\btheta = \{\bold{0},\bold{0}\}$, check the two-step Bayesian update distribution with $\alpha_a,\alpha_b$ and one-step Bayesian update with $\alpha=\alpha_a+\alpha_b$:
\begin{align}
\label{eq:nonadd}
     &\update(c'' \mid \parsn, \x; \alpha_a+\alpha_b)  = \delta(c-\alpha_a-\alpha_b)
     \ne  \mathbb{E}_{p_U(\parsn' \mid \parsn, \x; \alpha_a)}\update(c'' \mid \parsn', \x; \alpha_b) \nonumber \\&= \mathbb{E}_{vM(\bold{y}_b|\bold{x},\alpha_a)}\mathbb{E}_{vM(\bold{y}_a|\bold{x},\alpha_b)}\delta(c-||[\alpha_a \cos\y_a+\alpha_b\cos \y_b,\alpha_a \sin\y_a+\alpha_b\sin \y_b]^T||_2)
\end{align}
A more intuitive visualization could be found in \cref{fig:non_add}. This fundamental difference between periodic Bayesian flow and that of \citet{bfn} presents both theoretical and practical challenges, which we will explain and address in the following contents.

% This makes constructing Bayesian flow based on von Mises distribution intrinsically different from previous Bayesian flows (\citet{bfn}).

% Thus, we must reformulate the framework of Bayesian flow networks  accordingly. % and do necessary reformulations of BFN. 

% \yuxuan{overall I feel this part is complicated by using the language of update distribution. I would like to suggest simply use bayesian update, to provide intuitive explantion.}\hanlin{See the illustration in \cref{fig:non_add}}

% That introduces a cascade of problems, and we investigate the following issues: $(1)$ Accuracies between sender and receiver are not synchronized and need to be differentiated. $(2)$ There is no tractable Bayesian flow distribution for a one-step sample conditioned on a given time step $i$, and naively simulating the Bayesian flow results in computational overhead. $(3)$ It is difficult to control the entropy of the Bayesian flow. $(4)$ Accuracy is no longer a function of $t$ and becomes a distribution conditioned on $t$, which can be different across dimensions.
%\jj{Edited till here}

\textbf{Entropy Conditioning} As a common practice in generative models~\citep{ddpm,flowmatching,bfn}, timestep $t$ is widely used to distinguish among generation states by feeding the timestep information into the networks. However, this paper shows that for periodic Bayesian flow, the accumulated accuracy $\vc_i$ is more effective than time-based conditioning by informing the network about the entropy and certainty of the states $\parsnt{i}$. This stems from the intrinsic non-additive accuracy which makes the receiver's accumulated accuracy $c$ not bijective function of $t$, but a distribution conditioned on accumulated accuracies $\vc_i$ instead. Therefore, the entropy parameter $\vc$ is taken logarithm and fed into the network to describe the entropy of the input corrupted structure. We verify this consideration in \cref{sec:exp_ablation}. 
% \yuxuan{implement variant. traditionally, the timestep is widely used to distinguish the different states by putting the timestep embedding into the networks. citation of FM, diffusion, BFN. However, we find that conditioned on time in periodic flow could not provide extra benefits. To further boost the performance, we introduce a simple yet effective modification term entropy conditional. This is based on that the accumulated accuracy which represents the current uncertainty or entropy could be a better indicator to distinguish different states. + Describe how you do this. }

\textbf{Reformulations of BFN}. Recall the original update function with Gaussian sender distribution, after receiving noisy samples $\y_1,\y_2,\dots,\y_i$ with accuracies $\senderacc$, the accumulated accuracies of the receiver side could be analytically obtained by the additive property and it is consistent with the sender side.
% Since observing sample $\y$ with $\alpha_i$ can not result in exact accuracy increment $\alpha_i$ for receiver, the accuracies between sender and receiver are not synchronized which need to be differentiated. 
However, as previously mentioned, this does not apply to periodic Bayesian flow, and some of the notations in original BFN~\citep{bfn} need to be adjusted accordingly. We maintain the notations of sender side's one-step accuracy $\alpha$ and added accuracy $\beta$, and alter the notation of receiver's accuracy parameter as $c$, which is needed to be simulated by cascade of Bayesian updates. We emphasize that the receiver's accumulated accuracy $c$ is no longer a function of $t$ (differently from the Gaussian case), and it becomes a distribution conditioned on received accuracies $\senderacc$ from the sender. Therefore, we represent the Bayesian flow distribution of von Mises distribution as $p_F(\btheta|\x;\alpha_1,\alpha_2,\dots,\alpha_i)$. And the original simulation-free training with Bayesian flow distribution is no longer applicable in this scenario.
% Different from previous BFNs where the accumulated accuracy $\rho$ is not explicitly modeled, the accumulated accuracy parameter $c$ (visualized in \cref{fig:vmbf_vis}) needs to be explicitly modeled by feeding it to the network to avoid information loss.
% the randomaccuracy parameter $c$ (visualized in \cref{fig:vmbf_vis}) implies that there exists information in $c$ from the sender just like $m$, meaning that $c$ also should be fed into the network to avoid information loss. 
% We ablate this consideration in  \cref{sec:exp_ablation}. 

\textbf{Fast Sampling from Equivalent Bayesian Flow Distribution} Based on the above reformulations, the Bayesian flow distribution of von Mises distribution is reframed as: 
\begin{equation}\label{eq:flow_frac}
p_F(\btheta_i|\x;\alpha_1,\alpha_2,\dots,\alpha_i)=\E_{\update(\parsnt{1} \mid \parsnt{0}, \x ; \alphat{1})}\dots\E_{\update(\parsn_{i-1} \mid \parsnt{i-2}, \x; \alphat{i-1})} \update(\parsnt{i} | \parsnt{i-1},\x;\alphat{i} )
\end{equation}
Naively sampling from \cref{eq:flow_frac} requires slow auto-regressive iterated simulation, making training unaffordable. Noticing the mathematical properties of \cref{eq:h_m,eq:h_c}, we  transform \cref{eq:flow_frac} to the equivalent form:
\begin{equation}\label{eq:cirflow_equiv}
p_F(\vec{m}_i|\x;\alpha_1,\alpha_2,\dots,\alpha_i)=\E_{vM(\y_1|\x,\alpha_1)\dots vM(\y_i|\x,\alpha_i)} \delta(\vec{m}_i-\text{atan2}(\sum_{j=1}^i \alpha_j \cos \y_j,\sum_{j=1}^i \alpha_j \sin \y_j))
\end{equation}
\begin{equation}\label{eq:cirflow_equiv2}
p_F(\vec{c}_i|\x;\alpha_1,\alpha_2,\dots,\alpha_i)=\E_{vM(\y_1|\x,\alpha_1)\dots vM(\y_i|\x,\alpha_i)}  \delta(\vec{c}_i-||[\sum_{j=1}^i \alpha_j \cos \y_j,\sum_{j=1}^i \alpha_j \sin \y_j]^T||_2)
\end{equation}
which bypasses the computation of intermediate variables and allows pure tensor operations, with negligible computational overhead.
\begin{restatable}{proposition}{cirflowequiv}
The probability density function of Bayesian flow distribution defined by \cref{eq:cirflow_equiv,eq:cirflow_equiv2} is equivalent to the original definition in \cref{eq:flow_frac}. 
\end{restatable}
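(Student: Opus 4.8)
The plan is to exhibit an explicit change of coordinates on the belief space that conjugates the von Mises Bayesian update $h$ of \cref{eq:h_m,eq:h_c} into ordinary vector addition in $\R^2$, so that composing updates becomes summing vectors. Since the update acts independently on each of the $D$ coordinates of $\mathbb{T}^D$, it suffices to argue for a single coordinate; write $\theta=\{m,c\}$ for the scalar belief there, with uninformative prior $\theta_0=\{0,0\}$, and define $\phi(\theta)\defeq c\,(\cos m,\sin m)\in\R^2$, so $\phi(\theta_0)=\vzero$. The first step is to verify the \emph{conjugation identity}
\[ \phi\bigl(h(\{m,c\},y,\alpha)\bigr)=\phi(\{m,c\})+\alpha\,(\cos y,\sin y). \]
This is a direct trigonometric check: the two coordinates of the right-hand side are (up to commutativity of addition) exactly the two arguments of the $\text{atan2}$ in \cref{eq:h_m}, so its argument is the updated mean, while its modulus is $\sqrt{\alpha^2+c^2+2\alpha c\cos(y-m)}$ by the law of cosines, which is exactly the updated concentration of \cref{eq:h_c}. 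The only place $\phi$ fails to be injective onto its image is the zero vector ($c=0$), but after the first update $c\ge\min_j\alpha_j>0$ (and $c=0$ occurs with probability zero along the trajectories below), so this degeneracy does not affect the induced distributions.

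Next I would unroll the auto-regressive definition \cref{eq:flow_frac}. Writing $\update(\parsnt{j}\mid\parsnt{j-1},\x;\alphat{j})=\E_{vM(\y_j\mid\x,\alphat{j})}\,\delta\bigl(\parsnt{j}-h(\parsnt{j-1},\y_j,\alphat{j})\bigr)$, integrating out the intermediate beliefs against their delta functions, and using that the sender draws $\y_1,\dots,\y_i$ are mutually independent given $\x$ (each $vM(\y_j\mid\x,\alphat{j})$ depends on neither $\parsnt{j-1}$ nor the earlier draws), the nested expectations collapse to
\[ p_F(\parsnt{i}\mid\x;\senderacc)=\E_{\prod_{j=1}^i vM(\y_j\mid\x,\alphat{j})}\;\delta\bigl(\parsnt{i}-H_i(\y_1,\dots,\y_i)\bigr), \]
where $H_i$ is the $i$-fold composition $h(\cdots h(h(\theta_0,\y_1,\alphat{1}),\y_2,\alphat{2})\cdots,\y_i,\alphat{i})$. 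A one-line induction on $i$ using the conjugation identity then yields $\phi\bigl(H_i(\y_1,\dots,\y_i)\bigr)=\sum_{j=1}^i\alphat{j}\,(\cos\y_j,\sin\y_j)$, with base case $\phi(\theta_0)=\vzero$.

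Finally, reading off the polar coordinates of the planar vector $\sum_{j=1}^i\alpha_j(\cos\y_j,\sin\y_j)$ gives its argument and its modulus $\bigl\|[\sum_j\alpha_j\cos\y_j,\ \sum_j\alpha_j\sin\y_j]^\top\bigr\|_2$, which are precisely the quantities inside the delta functions of \cref{eq:cirflow_equiv,eq:cirflow_equiv2}. Thus both the auto-regressive construction and the closed-form construction realize $\parsnt{i}$ as the image of the same random vector $(\y_1,\dots,\y_i)\sim\prod_j vM(\cdot\mid\x,\alpha_j)$ under the same measurable map; they therefore induce the same law on belief space and hence the same probability density, and marginalizing this joint law gives the stated densities of $\vec{m}_i$ and $\vec{c}_i$. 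Taking the product over the $D$ independent coordinates recovers \cref{eq:cirflow_equiv,eq:cirflow_equiv2} verbatim.

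The only genuinely nontrivial point is the conjugation identity together with the careful handling of the $c=0$ degeneracy of $\phi$, which is what makes the equality of pushforward measures rigorous rather than a purely formal delta-function manipulation; everything after that is unrolling and a short induction.
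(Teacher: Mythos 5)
Your proof is correct and follows essentially the same route as the paper's: the paper's own argument combines its update equations into the identity $c_i\,[\cos m_i,\sin m_i]^\top=\alpha_i\,[\cos y_i,\sin y_i]^\top+c_{i-1}\,[\cos m_{i-1},\sin m_{i-1}]^\top$ and telescopes, which is exactly your conjugation map $\phi$ followed by induction. Your explicit treatment of the $c=0$ degeneracy and of collapsing the nested expectations is a welcome extra bit of care, but it is the same proof.
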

\textbf{Numerical Determination of Linear Entropy Sender Accuracy Schedule} ~Original BFN designs the accuracy schedule $\beta(t)$ to make the entropy of input distribution linearly decrease. As for crystal generation task, to ensure information coherence between modalities, we choose a sender accuracy schedule $\senderacc$ that makes the receiver's belief entropy $H(t_i)=H(p_I(\cdot|\vtheta_i))=H(p_I(\cdot|\vc_i))$ linearly decrease \emph{w.r.t.} time $t_i$, given the initial and final accuracy parameter $c(0)$ and $c(1)$. Due to the intractability of \cref{eq:vm_entropy}, we first use numerical binary search in $[0,c(1)]$ to determine the receiver's $c(t_i)$ for $i=1,\dots, n$ by solving the equation $H(c(t_i))=(1-t_i)H(c(0))+tH(c(1))$. Next, with $c(t_i)$, we conduct numerical binary search for each $\alpha_i$ in $[0,c(1)]$ by solving the equations $\E_{y\sim vM(x,\alpha_i)}[\sqrt{\alpha_i^2+c_{i-1}^2+2\alpha_i c_{i-1}\cos(y-m_{i-1})}]=c(t_i)$ from $i=1$ to $i=n$ for arbitrarily selected $x\in[-\pi,\pi)$.

After tackling all those issues, we have now arrived at a new BFN architecture for effectively modeling crystals. Such BFN can also be adapted to other type of data located in hyper-torus $\mathbb{T}^{D}$.

\subsection{Equivariant Bayesian Flow for Crystal}
With the above Bayesian flow designed for generative modeling of fractional coordinate $\vF$, we are able to build equivariant Bayesian flow for each modality of crystal. In this section, we first give an overview of the general training and sampling algorithm of \modelname (visualized in \cref{fig:framework}). Then, we describe the details of the Bayesian flow of every modality. The training and sampling algorithm can be found in \cref{alg:train} and \cref{alg:sampling}.

\textbf{Overview} Operating in the parameter space $\bthetaM=\{\bthetaA,\bthetaL,\bthetaF\}$, \modelname generates high-fidelity crystals through a joint BFN sampling process on the parameter of  atom type $\bthetaA$, lattice parameter $\vec{\theta}^L=\{\bmuL,\brhoL\}$, and the parameter of fractional coordinate matrix $\bthetaF=\{\bmF,\bcF\}$. We index the $n$-steps of the generation process in a discrete manner $i$, and denote the corresponding continuous notation $t_i=i/n$ from prior parameter $\thetaM_0$ to a considerably low variance parameter $\thetaM_n$ (\emph{i.e.} large $\vrho^L,\bmF$, and centered $\bthetaA$).

At training time, \modelname samples time $i\sim U\{1,n\}$ and $\bthetaM_{i-1}$ from the Bayesian flow distribution of each modality, serving as the input to the network. The network $\net$ outputs $\net(\parsnt{i-1}^\mathcal{M},t_{i-1})=\net(\parsnt{i-1}^A,\parsnt{i-1}^F,\parsnt{i-1}^L,t_{i-1})$ and conducts gradient descents on loss function \cref{eq:loss_n} for each modality. After proper training, the sender distribution $p_S$ can be approximated by the receiver distribution $p_R$. 

At inference time, from predefined $\thetaM_0$, we conduct transitions from $\thetaM_{i-1}$ to $\thetaM_{i}$ by: $(1)$ sampling $\y_i\sim p_R(\bold{y}|\thetaM_{i-1};t_i,\alpha_i)$ according to network prediction $\predM{i-1}$; and $(2)$ performing Bayesian update $h(\thetaM_{i-1},\y^\calM_{i-1},\alpha_i)$ for each dimension. 

% Alternatively, we complete this transition using the flow-back technique by sampling 
% $\thetaM_{i}$ from Bayesian flow distribution $\flow(\btheta^M_{i}|\predM{i-1};t_{i-1})$. 

% The training objective of $\net$ is to minimize the KL divergence between sender distribution and receiver distribution for every modality as defined in \cref{eq:loss_n} which is equivalent to optimizing the negative variational lower bound $\calL^{VLB}$ as discussed in \cref{sec:preliminaries}. 

%In the following part, we will present the Bayesian flow of each modality in detail.

\textbf{Bayesian Flow of Fractional Coordinate $\vF$}~The distribution of the prior parameter $\bthetaF_0$ is defined as:
\begin{equation}\label{eq:prior_frac}
    p(\bthetaF_0) \defeq \{vM(\vm_0^F|\vec{0}_{3\times N},\vec{0}_{3\times N}),\delta(\vc_0^F-\vec{0}_{3\times N})\} = \{U(\vec{0},\vec{1}),\delta(\vc_0^F-\vec{0}_{3\times N})\}
\end{equation}
Note that this prior distribution of $\vm_0^F$ is uniform over $[\vec{0},\vec{1})$, ensuring the periodic translation invariance property in \cref{De:pi}. The training objective is minimizing the KL divergence between sender and receiver distribution (deduction can be found in \cref{appd:cir_loss}): 
%\oyyw{replace $\vF$ with $\x$?} \hanlin{notations follow Preliminary?}
\begin{align}\label{loss_frac}
\calL_F = n \E_{i \sim \ui{n}, \flow(\parsn{}^F \mid \vF ; \senderacc)} \alpha_i\frac{I_1(\alpha_i)}{I_0(\alpha_i)}(1-\cos(\vF-\predF{i-1}))
\end{align}
where $I_0(x)$ and $I_1(x)$ are the zeroth and the first order of modified Bessel functions. The transition from $\bthetaF_{i-1}$ to $\bthetaF_{i}$ is the Bayesian update distribution based on network prediction:
\begin{equation}\label{eq:transi_frac}
    p(\btheta^F_{i}|\parsnt{i-1}^\calM)=\mathbb{E}_{vM(\bold{y}|\predF{i-1},\alpha_i)}\delta(\btheta^F_{i}-h(\btheta^F_{i-1},\bold{y},\alpha_i))
\end{equation}
\begin{restatable}{proposition}{fracinv}
With $\net_{F}$ as a periodic translation equivariant function namely $\net_F(\parsnt{}^A,w(\parsnt{}^F+\vt),\parsnt{}^L,t)=w(\net_F(\parsnt{}^A,\parsnt{}^F,\parsnt{}^L,t)+\vt), \forall\vt\in\R^3$, the marginal distribution of $p(\vF_n)$ defined by \cref{eq:prior_frac,eq:transi_frac} is periodic translation invariant. 
\end{restatable}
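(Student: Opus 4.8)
The plan is to prove periodic translation invariance by induction on the generation step $i$, showing that applying a shift $\vt$ to the data coordinates $\vF$ results in a correspondingly shifted distribution of parameters $\btheta^F_i$ at every step, so that in particular the final marginal $p(\vF_n)$ is invariant. Concretely, for a shift $w(\cdot + \vt)$ (where $w$ denotes wrapping into $[0,1)$, or its radian analogue), I want to establish the equivariance relation $p(\btheta^F_i \mid \vF + \vt) = p(w(\btheta^F_i + \vt) \mid \vF)$ in the appropriate sense, meaning the mean parameter $\vm^F_i$ transforms as $\vm^F_i \mapsto w(\vm^F_i + \vt)$ while the accuracy parameter $\vc^F_i$ is unchanged. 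Since the observable $\vF_n$ is obtained by reading off the mean parameter (the final output distribution is centered at $\predF{n-1}$ which is a function of $\btheta^F_{n-1}$), invariance of the pushforward to $\vF_n$-space follows.

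The base case is the prior: by \cref{eq:prior_frac}, $\vm^F_0$ is uniform on $[\vec 0, \vec 1)$ and $\vc^F_0 = \vec 0$; the uniform distribution on the torus is shift-invariant, so the claim holds trivially at $i=0$ and is in fact independent of $\vF$. For the inductive step, I would unwind the transition in \cref{eq:transi_frac}: given $\btheta^F_{i-1}$, one samples $\vy \sim vM(\cdot \mid \predF{i-1}, \alpha_i)$ and applies the Bayesian update $h$ from \cref{eq:h_m,eq:h_c}. Three facts combine here. First, the von Mises density satisfies $vM(y \mid m, c) = vM(y + \vt \mid m + \vt, c)$, so shifting the network's predicted center shifts the sampled $\vy$ by the same amount (in distribution). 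Second, by the assumed periodic translation equivariance of $\net_F$, replacing $\vF$ by $\vF+\vt$ and $\btheta^F_{i-1}$ by its shifted version sends $\predF{i-1} \mapsto w(\predF{i-1} + \vt)$. Third — and this is the crucial algebraic check — the update map $h$ is equivariant under a simultaneous shift of its mean-direction arguments: from \cref{eq:h_m} one has $\text{atan2}(\alpha \sin(y+\vt) + c_{i-1}\sin(m_{i-1}+\vt),\, \alpha\cos(y+\vt) + c_{i-1}\cos(m_{i-1}+\vt)) = \text{atan2}(\alpha\sin y + c_{i-1}\sin m_{i-1},\, \alpha\cos y + c_{i-1}\cos m_{i-1}) + \vt$ by the angle-addition identity applied inside the two-argument arctangent, while \cref{eq:h_c} is manifestly invariant since it depends on $y - m_{i-1}$ only. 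Chaining these, the shifted inputs produce exactly the shifted $\vm^F_i$ and unchanged $\vc^F_i$, closing the induction.

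Having propagated the shift-equivariance to step $n-1$, the final coordinates are read out through the output distribution centered at $\predF{n-1}$; the same two ingredients (equivariance of $\net_F$, and shift-covariance of the von Mises output distribution together with the subsequent wrap) show that the law of $\vF_n$ under data $\vF+\vt$ equals the law of $w(\vF_n + \vt)$ under data $\vF$. But in the ab initio generation setting there is no conditioning data — $\vF$ is itself being generated from the shift-invariant prior — so the relevant statement is that the generative marginal is invariant under the torus action, which follows because every intermediate law is built by shift-equivariant operations starting from a shift-invariant prior. This is precisely \cref{De:pi}, periodic translation invariance.

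I expect the main obstacle to be the careful bookkeeping of \emph{which} quantities carry the shift and which do not, across the coupled recursion — in particular making rigorous the informal statement "$\vc^F_i$ is unchanged while $\vm^F_i$ shifts" as an identity of measures rather than of point values, since $h$ is applied to random $\vy$ and the $\delta$-functions in \cref{eq:transi_frac} must be handled by a change of variables. The algebraic equivariance of atan2 under a common angular offset is routine but must be stated cleanly (it is exactly the statement that atan2 computes the argument of the complex number $\alpha e^{iy} + c_{i-1} e^{i m_{i-1}}$, and multiplying that complex number by $e^{i\vt}$ adds $\vt$ to the argument); phrasing the whole Bayesian update in complex-exponential form would make this step transparent and is the cleanest route. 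The dependence of the accuracy schedule $\alpha_i$ on nothing but the step index (not on $\vF$) should be noted so that the same schedule is used on both sides of the equivariance claim.
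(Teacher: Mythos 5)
Your proposal is correct and follows essentially the same route as the paper: an invariant uniform prior, periodic-translation equivariance of the von Mises density and of the Bayesian update $h$ (your complex-exponential argument for the atan2 identity is exactly the paper's $2\times 2$ rotation-matrix computation), combined with the assumed equivariance of $\net_F$ to obtain equivariant transition kernels, and then the standard invariant-prior-plus-equivariant-kernel argument, which the paper packages as a lemma borrowed from GeoDiff and you carry out as an induction. Your closing self-correction --- that the claim concerns the unconditional generative Markov chain rather than a law conditioned on data $\vF$ --- lands you precisely on the paper's formulation.
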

\textbf{Bayesian Flow of Lattice Parameter \texorpdfstring{$\boldsymbol{L}$}{}}   
Noting the lattice parameter $\bm{L}$ located in Euclidean space, we set prior as the parameter of a isotropic multivariate normal distribution $\btheta^L_0\defeq\{\vmu_0^L,\vrho_0^L\}=\{\bm{0}_{3\times3},\bm{1}_{3\times3}\}$
% \begin{equation}\label{eq:lattice_prior}
% \btheta^L_0\defeq\{\vmu_0^L,\vrho_0^L\}=\{\bm{0}_{3\times3},\bm{1}_{3\times3}\}
% \end{equation}
such that the prior distribution of the Markov process on $\vmu^L$ is the Dirac distribution $\delta(\vec{\mu_0}-\vec{0})$ and $\delta(\vec{\rho_0}-\vec{1})$, 
% \begin{equation}
%     p_I^L(\boldsymbol{L}|\btheta_0^L)=\mathcal{N}(\bm{L}|\bm{0},\bm{I})
% \end{equation}
which ensures O(3)-invariance of prior distribution of $\vL$. By Eq. 77 from \citet{bfn}, the Bayesian flow distribution of the lattice parameter $\bm{L}$ is: 
\begin{align}% =p_U(\bmuL|\btheta_0^L,\bm{L},\beta(t))
p_F^L(\bmuL|\bm{L};t) &=\mathcal{N}(\bmuL|\gamma(t)\bm{L},\gamma(t)(1-\gamma(t))\bm{I}) 
\end{align}
where $\gamma(t) = 1 - \sigma_1^{2t}$ and $\sigma_1$ is the predefined hyper-parameter controlling the variance of input distribution at $t=1$ under linear entropy accuracy schedule. The variance parameter $\vrho$ does not need to be modeled and fed to the network, since it is deterministic given the accuracy schedule. After sampling $\bmuL_i$ from $p_F^L$, the training objective is defined as minimizing KL divergence between sender and receiver distribution (based on Eq. 96 in \citet{bfn}):
\begin{align}
\mathcal{L}_{L} = \frac{n}{2}\left(1-\sigma_1^{2/n}\right)\E_{i \sim \ui{n}}\E_{\flow(\bmuL_{i-1} |\vL ; t_{i-1})}  \frac{\left\|\vL -\predL{i-1}\right\|^2}{\sigma_1^{2i/n}},\label{eq:lattice_loss}
\end{align}
where the prediction term $\predL{i-1}$ is the lattice parameter part of network output. After training, the generation process is defined as the Bayesian update distribution given network prediction:
\begin{equation}\label{eq:lattice_sampling}
    p(\bmuL_{i}|\parsnt{i-1}^\calM)=\update^L(\bmuL_{i}|\predL{i-1},\bmuL_{i-1};t_{i-1})
\end{equation}

% The final prediction of the lattice parameter is given by $\bmuL_n = \predL{n-1}$.
% \begin{equation}\label{eq:final_lattice}
%     \bmuL_n = \predL{n-1}
% \end{equation}

\begin{restatable}{proposition}{latticeinv}\label{prop:latticeinv}
With $\net_{L}$ as  O(3)-equivariant function namely $\net_L(\parsnt{}^A,\parsnt{}^F,\vQ\parsnt{}^L,t)=\vQ\net_L(\parsnt{}^A,\parsnt{}^F,\parsnt{}^L,t),\forall\vQ^T\vQ=\vI$, the marginal distribution of $p(\bmuL_n)$ defined by \cref{eq:lattice_sampling} is O(3)-invariant. 
\end{restatable}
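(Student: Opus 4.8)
The plan is to run the standard template for proving distributional invariance of an iterative generative model, exactly paralleling the argument behind the fractional-coordinate proposition: show that (i) the prior over $\bmuL_0$ is O(3)-invariant, (ii) every Bayesian-update transition kernel in \cref{eq:lattice_sampling} is O(3)-equivariant, and then (iii) propagate invariance through the $n$ sampling steps by induction to reach the terminal marginal $p(\bmuL_n)$.

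For (i), recall that the lattice prior puts the Dirac mass $\delta(\bmuL_0-\bm{0})$ on the mean channel together with the deterministic precision $\vrho_0^L=\bm{1}_{3\times3}$. Since $\vQ\,\bm{0}=\bm{0}$ for every orthogonal $\vQ$, the pushforward of this prior under $\bmuL\mapsto\vQ\bmuL$ equals itself; the precision channel is a schedule-determined scalar object untouched by $\vQ$, so it plays no role. For (ii), I would unwrap the Gaussian Bayesian update underlying $\update^L$: drawing $\vy_i\sim\mathcal{N}(\predL{i-1},\alpha_i^{-1}\vI)$ and applying the Gaussian update $h$ yields a Gaussian law for $\bmuL_i$ whose mean is a fixed linear combination $a_i\,\bmuL_{i-1}+b_i\,\predL{i-1}$ (with $a_i,b_i$ functions of the accuracy schedule only, e.g.\ $a_i=\rho_{i-1}/(\rho_{i-1}+\alpha_i)$) and whose covariance is isotropic, proportional to $\vI$. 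Now act by $\vQ\in\mathrm{O}(3)$ on the lattice channel of $\thetaM_{i-1}$ while leaving the atom-type channel $\parsnt{i-1}^A$ and the fractional-coordinate channel $\parsnt{i-1}^F$ fixed; this is the correct group action precisely because $\vX=\vL\vF$, so rotating $\vL$ leaves $\vF$ (and $\vA$) unchanged. By the hypothesized O(3)-equivariance of $\net_L$ the prediction at the rotated input equals $\vQ\,\predL{i-1}$, hence the Gaussian mean becomes $\vQ\big(a_i\,\bmuL_{i-1}+b_i\,\predL{i-1}\big)$; together with the rotation-invariance of an isotropic Gaussian and the unit Jacobian of $\vQ$, this gives the kernel identity $\update^L(\vQ\bmuL_i\mid\vQ\thetaM_{i-1})=\update^L(\bmuL_i\mid\thetaM_{i-1})$, i.e.\ the transition is O(3)-equivariant.

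For (iii), I would induct on $i$: assuming $p(\thetaM_{i-1})$ is invariant under the O(3) action that rotates the lattice channel and fixes the others (base case $i=1$ from (i) plus the analogous priors for the other modalities), write $p(\bmuL_i)=\int \update^L(\bmuL_i\mid\thetaM_{i-1})\,p(\thetaM_{i-1})\,d\thetaM_{i-1}$, substitute $\thetaM_{i-1}\mapsto$ its $\vQ$-rotation, and combine the equivariance from (ii) with the inductive hypothesis to obtain $p(\vQ\bmuL_i)=p(\bmuL_i)$; setting $i=n$ concludes. Propagating invariance of the full tuple $p(\thetaM_i)$ along the way also uses the analogous equivariance statements for the atom-type and coordinate channels, but those are exactly the earlier propositions.

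The main obstacle is step (ii): one must carefully check that the Gaussian Bayesian update introduces no anisotropy — i.e.\ that after the ``$\vy_i\sim\mathcal{N}(\predL{i-1},\alpha_i^{-1}\vI)$, then $h$'' composition the law of $\bmuL_i$ is still an isotropic Gaussian with mean affine-linear in $(\bmuL_{i-1},\predL{i-1})$ — and to pin down precisely the group action on the joint parameter $\thetaM$ so that the cross-modal dependence of $\net_L$ on $\parsnt{i-1}^A$ and $\parsnt{i-1}^F$ is consistent with those channels being genuinely fixed by the action (this is where $\vX=\vL\vF$ is essential). Once (ii) is in place, the prior invariance and the inductive bookkeeping are routine.
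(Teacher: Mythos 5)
Your proposal follows essentially the same route as the paper's proof: it establishes O(3)-invariance of the Dirac prior at $\bm{0}$, shows the Gaussian Bayesian-update kernel is O(3)-equivariant by combining the equivariance of $\net_L$ with the rotation-invariance of an isotropic normal density, and then propagates invariance through the Markov chain via the standard lemma. The details you flag as the main obstacle (the affine-linear mean $\frac{\alpha\hat{\net}_L+\rho_{i-1}\bmuL_{i-1}}{\rho_i}$ and isotropic covariance $\frac{\alpha}{\rho_i^2}\vI$) are exactly what the paper verifies explicitly, so the argument is correct and matches.
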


\textbf{Bayesian Flow of Atom Types \texorpdfstring{$\boldsymbol{A}$}{}} 
Given that atom types are discrete random variables located in a simplex $\calS^K$, the prior parameter of $\boldsymbol{A}$ is the discrete uniform distribution over the vocabulary $\parsnt{0}^A \defeq \frac{1}{K}\vec{1}_{1\times N}$. 
% \begin{align}\label{eq:disc_input_prior}
% \parsnt{0}^A \defeq \frac{1}{K}\vec{1}_{1\times N}
% \end{align}
% \begin{align}
%     (\oh{j}{K})_k \defeq \delta_{j k}, \text{where }\oh{j}{K}\in \R^{K},\oh{\vA}{KD} \defeq \left(\oh{a_1}{K},\dots,\oh{a_N}{K}\right) \in \R^{K\times N}
% \end{align}
With the notation of the projection from the class index $j$ to the length $K$ one-hot vector $ (\oh{j}{K})_k \defeq \delta_{j k}, \text{where }\oh{j}{K}\in \R^{K},\oh{\vA}{KD} \defeq \left(\oh{a_1}{K},\dots,\oh{a_N}{K}\right) \in \R^{K\times N}$, the Bayesian flow distribution of atom types $\vA$ is derived in \citet{bfn}:
\begin{align}
\flow^{A}(\parsn^A \mid \vA; t) &= \E_{\N{\y \mid \beta^A(t)\left(K \oh{\vA}{K\times N} - \vec{1}_{K\times N}\right)}{\beta^A(t) K \vec{I}_{K\times N \times N}}} \delta\left(\parsn^A - \frac{e^{\y}\parsnt{0}^A}{\sum_{k=1}^K e^{\y_k}(\parsnt{0})_{k}^A}\right).
\end{align}
where $\beta^A(t)$ is the predefined accuracy schedule for atom types. Sampling $\btheta_i^A$ from $p_F^A$ as the training signal, the training objective is the $n$-step discrete-time loss for discrete variable \citep{bfn}: 
% \oyyw{can we simplify the next equation? Such as remove $K \times N, K \times N \times N$}
% \begin{align}
% &\calL_A = n\E_{i \sim U\{1,n\},\flow^A(\parsn^A \mid \vA ; t_{i-1}),\N{\y \mid \alphat{i}\left(K \oh{\vA}{KD} - \vec{1}_{K\times N}\right)}{\alphat{i} K \vec{I}_{K\times N \times N}}} \ln \N{\y \mid \alphat{i}\left(K \oh{\vA}{K\times N} - \vec{1}_{K\times N}\right)}{\alphat{i} K \vec{I}_{K\times N \times N}}\nonumber\\
% &\qquad\qquad\qquad-\sum_{d=1}^N \ln \left(\sum_{k=1}^K \out^{(d)}(k \mid \parsn^A; t_{i-1}) \N{\ydd{d} \mid \alphat{i}\left(K\oh{k}{K}- \vec{1}_{K\times N}\right)}{\alphat{i} K \vec{I}_{K\times N \times N}}\right)\label{discdisc_t_loss_exp}
% \end{align}
\begin{align}
&\calL_A = n\E_{i \sim U\{1,n\},\flow^A(\parsn^A \mid \vA ; t_{i-1}),\N{\y \mid \alphat{i}\left(K \oh{\vA}{KD} - \vec{1}\right)}{\alphat{i} K \vec{I}}} \ln \N{\y \mid \alphat{i}\left(K \oh{\vA}{K\times N} - \vec{1}\right)}{\alphat{i} K \vec{I}}\nonumber\\
&\qquad\qquad\qquad-\sum_{d=1}^N \ln \left(\sum_{k=1}^K \out^{(d)}(k \mid \parsn^A; t_{i-1}) \N{\ydd{d} \mid \alphat{i}\left(K\oh{k}{K}- \vec{1}\right)}{\alphat{i} K \vec{I}}\right)\label{discdisc_t_loss_exp}
\end{align}
where $\vec{I}\in \R^{K\times N \times N}$ and $\vec{1}\in\R^{K\times D}$. When sampling, the transition from $\bthetaA_{i-1}$ to $\bthetaA_{i}$ is derived as:
\begin{equation}
    p(\btheta^A_{i}|\parsnt{i-1}^\calM)=\update^A(\btheta^A_{i}|\btheta^A_{i-1},\predA{i-1};t_{i-1})
\end{equation}

The detailed training and sampling algorithm could be found in \cref{alg:train} and \cref{alg:sampling}.

\section{Experiments}\label{sec:exp}
We evaluate on two crystal generation tasks: ab initio generation in \cref{sec:exp_abinitio} and stable structure prediction task in \cref{sec:exp_csp}. Ablation studies are detailed in \cref{sec:exp_ablation} to validate design choices. We provide the implementation details in \cref{appd:imple_details}.

%\textbf{Datasets} 
Following \citet{xie2021crystal,jiao2023crystal}, we choose the following datasets for evaluation: $1)$ \textbf{Perov-5}~\citep{castelli2012new, castelli2012computational} is composed of 18,928 perovskite crystals of similar structures, with 5 atoms in a unit cell sharing the chemical formula ABX$_3$. $2)$ \textbf{Carbon-24}~\citep{carbon2020data} contains 10,153 crystals with 6$\sim$24 atoms in a cell, and all atom types are carbon. $3)$ \textbf{MP-20}~\citep{jain2013commentary} selects 45,231 stable inorganic materials from Material Projects~\citep{jain2013commentary}, including the majority of experimentally-verified materials with at most 20 atoms in a unit cell. $4)$ \textbf{MPTS-52} \citep{jiao2023crystal} consists of 40,476 crystals up to 52 atoms per cell, which is a more challenging extension of MP-20. All crystals are reduced as Niggli cells \citep{nigglicell}. The procedure to split the datasets into training, validation, and testing subsets adheres to prior practices~\citep{xie2021crystal,jiao2023crystal}. 

\subsection{Ab Initio Generation}\label{sec:exp_abinitio}
\begin{table}[t!]
  \caption{Results on ab initio generation task. Baseline results are from \citet{xie2021crystal,jiao2023crystal,flowmm}.}
  \vskip -0.1in
  \label{tab:abgen}
  \centering
  \resizebox{0.99\linewidth}{!}{
  \begin{tabular}{ccccccccc}
    \toprule
    \multirow{2}{*}{\bf Data} & \multirow{2}{*}{\bf Method}  & \multicolumn{2}{c}{\bf Validity (\%)  $\uparrow$}  & \multicolumn{2}{c}{\bf Coverage (\%) $\uparrow$}  & \multicolumn{3}{c}{\bf Property $\downarrow$}  \\
    % \cmidrule(r){1-2}
    ~     & ~     & Struc. & Comp. & COV-R & COV-P & $d_\rho$ & $d_E$ & $d_{elem}$ \\
    \midrule
    \multirow[t]{5}{*}{Perov-5} 
    & Cond-DFC-VAE~\citep{court20203} & 73.60 & 82.95 & 73.92 & 10.13 & 2.268 & 4.111 & 0.8373 \\
    & G-SchNet~\citep{NIPS2019_8974}  & 99.92 & 98.79 & 0.18 & 0.23 & 1.625 &	4.746 &	0.0368 \\
    & P-G-SchNet~\citep{NIPS2019_8974} & 79.63 & \textbf{99.13} & 0.37 & 0.25 & 0.2755 & 1.388 & 0.4552 \\
    & CDVAE~\citep{xie2021crystal} & \textbf{100.0} & 98.59 & 99.45 & 98.46 & 0.1258 & 0.0264 & 0.0628 \\
    & DiffCSP\citep{jiao2023crystal}  & \textbf{100.0} & 98.85 & \textbf{99.74} & 98.27 & 0.1110 & 0.0263 & 0.0128 \\
    & \modelname & \textbf{100.0} & 98.86 & 99.52 & \textbf{98.63} & \textbf{0.0728} & \textbf{0.0198} & \textbf{0.0098} \\
    \midrule
    \multirow[t]{5}{*}{Carbon-24} 
    & G-SchNet~\citep{NIPS2019_8974} & 99.94	 & --	  & 0.00 & 0.00 & 0.9427 & 1.320 &	-- \\
    & P-G-SchNet~\citep{NIPS2019_8974} & 48.39 & -- & 0.00 & 0.00 & 1.533 & 134.7 & --  \\
    & CDVAE~\citep{xie2021crystal} & \textbf{100.0} & -- & 99.80 & 83.08 & 0.1407 & 0.2850 & -- \\
    & DiffCSP \citep{jiao2023crystal}     & \textbf{100.0} & -- & \textbf{99.90} & 97.27 & 0.0805 & 0.0820 & --  \\
    & \modelname   & \textbf{100.0} & -- & \textbf{99.90} & \textbf{99.12} & \textbf{0.0612} & \textbf{0.0503} & --  \\
    \midrule
    \multirow[t]{5}{*}{MP-20} 
    & G-SchNet~\citep{NIPS2019_8974} & 99.65 &	75.96 & 38.33 & 99.57 &	3.034 &	42.09 &	0.6411	\\
    & P-G-SchNet~\citep{NIPS2019_8974} & 77.51 & 76.40 & 41.93 & 99.74 & 4.04 & 2.448 & 0.6234 \\
    & CDVAE~\citep{xie2021crystal} & \textbf{100.0} & 86.70 & 99.15 & 99.49 & 0.6875 & 0.2778 & 1.432  \\
    & DiffCSP\citep{jiao2023crystal} &  \textbf{100.0} & 83.25 & \textbf{99.71} & 99.76 & 0.3502 & 0.1247 & 0.3398  \\
    & FlowMM \citep{flowmm} &  96.85 & 83.19 & 99.49 & 99.58 & 0.239 & - & \textbf{0.083}  \\
     & \modelname & \textbf{100.0} & \textbf{87.51} & 99.09 & \textbf{99.79} & \textbf{0.2067} & \textbf{0.0632} & 0.1628\\
    \bottomrule
  \end{tabular}
  }
% \vskip -0.2in
\end{table}

\begin{table*}[t!]
\vskip -0.1in
  \centering
  \caption{Results on stable structure prediction task. Baseline results are from \citet{jiao2023crystal,flowmm}.}
  \resizebox{0.90\linewidth}{!}{
  % \small
    \setlength{\tabcolsep}{2.5pt}
    \begin{tabular}{lccccccc}
    \toprule
    \multirow{2}[2]{*}{} & \multicolumn{2}{c}{Perov-5 } &  \multicolumn{2}{c}{MP-20} & \multicolumn{2}{c}{MPTS-52} \\
          &  Match rate$\uparrow$ & RMSE$\downarrow$   & Match rate$\uparrow$ & RMSE$\downarrow$ & Match rate$\uparrow$ & RMSE$\downarrow$ \\
    \midrule
    % \multicolumn{7}{c}{\textit{Generative Models}} \\
    % \midrule
     % & 1     & 48.22  & 0.4179  & 15.39 & 0.3762 & 3.67 & 0.4115   \\
     %  P-cG-SchNet~\citep{gebauer2022inverse}    & 20    &   97.94  &  0.3463 & 32.64   & 0.3018 & 12.96 & 0.3942 \\    
    % \midrule
    CDVAE~\citep{xie2021crystal}   & 45.31  & 0.1138  & 33.90  & 0.1045 & 5.34 & 0.2106 \\
          % & 20    & 88.51  & 0.0464 & 66.95  & 0.1026 & 20.79 & 0.2085 \\
    \midrule
    DiffCSP~\citep{jiao2023crystal}      & 52.02  & 0.0760  & 51.49 &	0.0631 & 12.19 & 0.1786 \\
          % & 20    & 98.60 & 0.0128 &  77.93 & 0.0492 & 34.02 & 0.1749 \\
    \midrule
    FlowMM~\citep{flowmm}     & 53.15  & 0.0992  & 61.39 &	 0.0566 & 17.54 &  0.1726 \\
          % & 20    & 98.60 & 0.0128 &  77.93 & 0.0492 & 34.02 & 0.1749 \\
    % \midrule
    % EquiCSP~\citep{equicsp} & 1     & 52.02  & 0.0707  & 57.59 &	 0.0510 & 14.85 &  0.1169 \\
          % & 20    & 98.60 & 0.0128 &  77.93 & 0.0492 & 34.02 & 0.1749 \\
    \midrule
    \modelname    & \textbf{54.69}  & \textbf{0.0636} & \textbf{64.35} &	\textbf{0.0433} & \textbf{20.52} & \textbf{0.1038} \\
     % & 20 & \textbf{98.64} & \textbf{0.0116} &\textbf{80.72} & \textbf{0.0473} & \textbf{36.86} & \textbf{0.1596} \\
    \bottomrule
    \end{tabular}
    }
\vskip -0.25in
 \label{tab:oto}%
\end{table*}%

\textbf{Baselines} For this task, the compared baselines include:  $2)$ two-stage VAE-based methods \textbf{Cond-DFC-VAE} \citep{court20203} and \textbf{CD-VAE} \citep{xie2021crystal}; $2)$ auto-regressive method \textbf{G-SchNet}~\citep{NIPS2019_8974}, and its periodic adaptation \textbf{P-G-SchNet}~\citep{xie2021crystal}; $3)$ diffusion-based joint generation approach \textbf{DiffCSP} \citep{jiao2023crystal}. $4)$ flow-matching-based approach \textbf{FlowMM} (Note that FlowMM only reports results on MP-20 and excludes $d_{E}$). We follow \citep{hoogeboom2022equivariant,jiao2023crystal} to sample atom numbers from a distribution that is pre-computed based on atom numbers in the training dataset. \textbf{Performance Indicators} Following previous work \citep{xie2021crystal}, we evaluate the efficacy of our model from three aspects: $1)$ \textit{Validity}: Structure and compositional validity of randomly generated 10000 materials. $2)$ \textit{Coverage}: Coverage score between generated 10000 materials and test set, defined by average minimum structure distance and average minimum compositional distance. $3)$ \textit{Property Statistics}: the earth mover's distance (EMD) between the property distribution of generated crystals and test dataset crystals. Monitored properties include density ($\rho$, unit g/cm$^3$), energy predicted by an independent GNN ($E$, unit eV/atom), and the number of unique elements (\# elem.).

\textbf{Results} The evaluation metrics for ab initio generation tasks are listed in \cref{tab:abgen}. Our method consistently achieves better or competitive property statistics and generation precision on three datasets compared to baseline generative models. For compositional metrics including $d_{elem}$ and compositional validity, our method demonstrates bigger performance improvement for the more challenging dataset MP-20 ($+4.34$\% compared to DiffCSP with the same level of $d_{elem}$), underscoring the importance of modeling atom types in the simplex space. 

\subsection{Stable Structure Prediction} \label{sec:exp_csp}
In this section, we extend our method to stable structure prediction task, where the modeling target is $p(\vL,\vF|\vA)$. The condition of atom types $\vA$ is incorporated into the network by concatenating node feature and atom type embedding, following \citet{jiao2023crystal}. 

\textbf{Baselines}~Following the practices in \citet{jiao2023crystal}, we select baselines generative approaches including Diffusion-based approaches \textbf{CD-VAE} and \textbf{DiffCSP} and the recent flow-matching-based method, \textbf{FlowMM}, which only reports results on the MP-20 dataset for this task. \textbf{Performance Indicators}~The measured performance indicators for this task are \textbf{Match Rate} and \textbf{RMSD} computed by \verb|StructureMatcher| class with thresholds \verb|stol=0.5, angle_tol=10, ltol=0.3| in pymatgen~\citep{ong2013python}, between the predicted 
 structure candidates and the groundtruth structure given the composition. \textbf{Results} As summarized in \cref{tab:oto},  \modelname achieves consistent performance improvement over baseline methods, especially for more challenging datasets ($\sim$ 13$\%$ higher match rate than DiffCSP for MP-20 and $\sim$ 40$\%$ lower RMSE compared to FlowMM for MPTS-52).

\begin{wraptable}{r}{0.50\textwidth}
% \begin{table}
 \vskip -0.3in
\small
  \centering
  \caption{Ablation studies on MP-20. }
    \begin{tabular}{lcc}
    \toprule
          & Match rate (\%) $\uparrow$ & RMSE $\downarrow$  \\
    \midrule
    \modelname & \textbf{64.35} & \textbf{0.0433} \\
    % \midrule
    % \addlinespace[-0.1pt]
    % \rowcolor{gray!30}\multicolumn{3}{c}{\textit{w/ diffusion schedule}} \\
    % \addlinespace[-2pt]
    \midrule
    % $\vF$ $w/$ diffusion schedule & 51.71 & 0.0656 \\
    $w/o$~entropy cond. & 52.16 & 0.0631 \\
    $w/o $ approx. sch. & 49.76 & 0.0643 \\
    $w/o$ torus BFN & 6.17 & 0.3822\\
    % $w/o$ equivariant network & 0.00 & -\\    
    % $\mF\rightarrow\mL$ & 36.73  & 0.0838 \\
    \midrule
     & \multicolumn{2}{c}{1k Batches Sim. Time (s)} \\
    \midrule
    Iterated Sim.  & \multicolumn{2}{c}{356.1}\\
    Fast Sim. & \multicolumn{2}{c}{92.6}\\
    % \midrule
    \bottomrule
    \end{tabular}%
  \label{tab:exp_abl}%
   % \vskip -0.1in

\end{wraptable}

\subsection{Ablation Study}\label{sec:exp_ablation}

Using MP-20 dataset and stable structure prediction task, we validate the necessity of proposed components of CrysBFN with results summarized in \cref{tab:exp_abl}: 

$1)$ By removing the entropy parameter condition $\bcF$ and using time as condition, match rate drops to $52.16\%$, proving that, different from original BFN, the non-additive accuracy dynamics requires the network to model both mean parameter $m$ and entropy parameter condition $c$. 

$2)$ By altering the approximated linear-entropy sender accuracy schedule to the hand-designed roughly linear-entropy schedule $c(t)=t c^{t}(1)$, we validate the effect of exact searched linear entropy schedule. 

$3)$ By replacing the proposed hyper-torus BFN with the original continuous BFN, we observe poor match rate at $6.17\%$, indicating the importance of redesigning BFN for crystal data. Calculating the computational time for simulating 1000 batches, we observe $\sim4\times$ efficiency, verifying fast sampling rate considering the full training procedure of MP-20 requires $\sim 150$k steps.

\subsection{Sampling Efficiency Experiment}
\begin{wrapfigure}{r}{0.5\textwidth} % 图片靠右，宽度为文本宽度的40%
\vskip -0.4in
    \centering
    \includegraphics[width=0.4\textwidth]{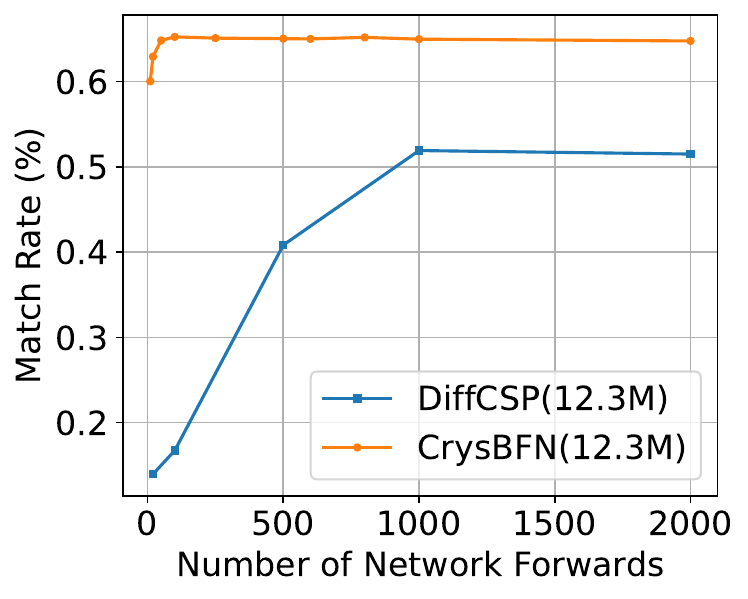} 
    \caption{Experimental results on MP-20 with different Number of Function Evaluations (NFE) \emph{i.e.} number of network forward passes. } % 图像标题
    \label{fig:eff_compare}
\vskip -0.9in
\end{wrapfigure}
We compare the sampling efficiency of CrysBFN and DiffCSP over the CSP task on the MP-20 dataset, based on the Number of Function Evaluations (NFE), \textit{i.e.}, the number of network forward passes. The experimental results is plotted in \cref{fig:eff_compare}. Notably, CrysBFN achieves a remarkable match rate of 60.02\% with only 10 step network forwards, surpassing DiffCSP's performance of 51.49\% at 2000 step network forwards. This illustrates the exceptional sampling efficiency of CrysBFN.

\section{Conclusion}\label{sec:discussion}
In this paper, we present the first periodic Bayesian flow modeling on the hyper-torus, addressing an unprecedented theoretical issue related to non-additive accuracy. Specifically, we introduce a novel entropy conditioning mechanism, theoretical reformulations of BFN, a fast sampling algorithm, and a numerical method for determining the accuracy schedule. Leveraging the proposed periodic Bayesian flow, we implement the first periodic E(3) equivariant Bayesian flow networks for crystal generation. Our approach achieves state-of-the-art performance in crystal generation, with efficiency improved by two orders of magnitude. Additionally, our methodology can be adapted to a wide range of data types and tasks involving hyper-torus data.

\newpage

\section*{Ethics Statement}
We confirm that our work complies with the ICLR Code of Ethics, and we have carefully considered potential ethical concerns related to the development and use of our proposed method, \modelname, for crystal generation. Our method is designed for general crystal generative modeling tasks and does not involve sensitive data or tasks. We strongly encourage users to ensure compliance with relevant privacy regulations and critically assess the model's outputs. We are confirmed that there is no conflict of interest, financial sources or other factors, that could influence the development or presentation of this work. 

With these considerations, we do not anticipate any violations of the ICLR Code of Ethics in the development or use of this model. And we stress once again that \modelname should not be used for malicious purposes, such as creating harmful structures.

% However, we recognize that when generative models are applied in specific domains, such as functional material discovery, they may unintentionally produce synthetic data that resembles unexpected structures. 

\section*{Reproducibility Statement}
To ensure reproducibility, we give a detailed derivation of the periodic Bayesian flow in \cref{appd:bfn_cir} and the proof of the propositions in \cref{appd:geometric_invar}. All datasets and performance evaluation method used in our experiments are publicly available and clearly specified or cited in \cref{sec:exp}. We provide our implementation details including training and sampling procedure, hyper-parameters, used computational resources and anonymous code repository link in \cref{appd:imple_details}.

\subsubsection*{Acknowledgments}
This work is supported by the National Science and Technology Major Project (2022ZD0117502), the Natural Science Foundation of China (Grant No. 62376133) and sponsored by Beijing Nova Program (20240484682).

% \bibliography{iclr2025_conference}
% \bibliography{sample}

\bibliographystyle{iclr2025_conference}

\appendix
\clearpage
\section{Bayesian Flow Networks for Circular Data}\label{appd:bfn_cir}
% \yuxuan{Could add some synthetic cases, to provide more intuitive justification}
In this section, we provide a detailed derivation of Bayesian flow networks considering periodicity. 
\subsection{Circular Data and von Mises Distribution}\label{appd:vm_intro}
\begin{figure}
    \centering
    \includegraphics[width=0.8\textwidth]{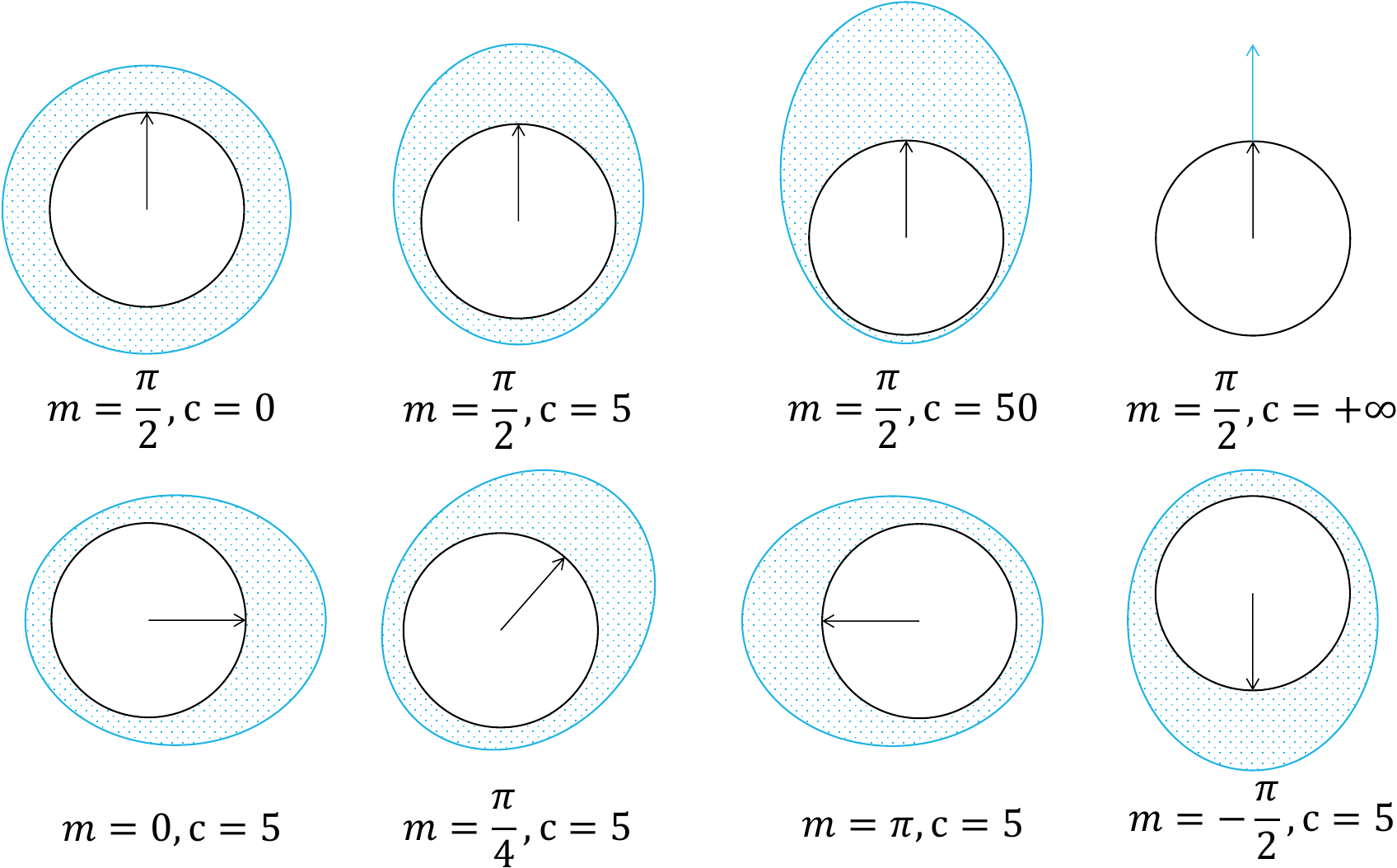}
    \caption{Depiction of von Mises distributions with different directions parameters $m$ and concentration parameters $c$. The parameter $m$ denotes the central location about which the distribution is centered, while $c$ functions as a measure of the distribution's concentration. When $c = 0$, the distribution is uniform on the circle. As $c$ increases, the distribution becomes more concentrated around the value $m$, with $c$ serving as a measure of this concentration. In the limit as $c \rightarrow +\infty$, the distribution converges to $\delta(m)$, a Dirac delta distribution centered at $m$.  }
    \label{fig:vm_alpha}
\end{figure}
One-dimensional circular data $x$ refers to observations of random variables supporting on the circumference of the unit circle defined in directional statistics \citep{mardia2009directional,ley2017modern}. This space can be represented by the torus:
\begin{equation}
     \mathbb{T}^1\defeq \{\vz\in\mathbb{R}^2:||\vz||=1\}
\end{equation}

For $n$-dimensional data $\vx\in \mathbb{R}^n$, the set of $\vx$ with every dimension located in $\mathbb{T}^1$ form a compact Riemannian manifold named hyper-torus $\mathbb{T}^n$ formally.
% = \underbrace{\mathbb{T}^1 \times \mathbb{T}^1 \dots \times \mathbb{T}^1}_{n\ \text{times}} 
% \mathbb{T}^n = \underbrace{\mathbb{T}^1 \times \mathbb{T}^1 \dots \times \mathbb{T}^1}_{n\ \text{times}}
% Circular data is commonly seen in life and science areas such as the wind directions, dihedral angles between certain atoms of amino acids, and the fractional coordinate of atoms located in crystal unit cells in this paper. % can be deleted if no space

Wrapped normal distribution used in \citet{jiao2023crystal} and von Mises distribution used in this paper are both circular distributions defined in this space, the probability density function of von Mises distribution with mean direction parameter $m$ and concentration parameter $c$ is 
\begin{equation}
f(x|m,c)=vM(x|m,c)=\frac{\exp(c\cos(x-m))}{2\pi I_0(c)}
\end{equation}
where $I_0(c)$ is the modified Bessel function of the first kind of order 0 as the normalizing constant. The parameters $m$ and $1/c$ are analogous to mean $\mu$ and variance $\sigma^2$ in the normal distribution: 1) $m$ represents the central location around which the distribution is clustered, while $c$ serves as a measure of concentration. 2) We give a depiction of von Mises distributions with different directions parameters in \cref{fig:vm_alpha}. When $c$ equals zero, the distribution is uniform. As $c$ becomes large, the distribution becomes tightly concentrated around the value $m$, with $c$ quantifying this concentration. In the limit as $c\rightarrow+\infty$, the distribution becomes a Dirac delta distribution centered at $m$. Its support can be chosen as any interval of length $2\pi$ and we in this paper choose $[-\pi,\pi)$. Note that the fractional coordinate can be transformed to this interval easily by a linear transformation $g(x)=2\pi x-\pi$. For this modeled interval, the map function from $\R$ to $[-\pi,\pi)$ is 
\begin{equation}
w_{[-\pi,\pi)}(x)=(x-\pi)\%2\pi-\pi=x+2\pi k, \exists k\in \mathbb{Z}
\end{equation}
Such map function is equivalent to the map function $w(x)=x-\lfloor x\rfloor=x+k, \exists k\in \mathbb{Z}$ used in \citet{jiao2023crystal} if we choose the modelled interval as $[0,1)$. Then, we can prove that the probability density distribution of von Mises distribution is equivariant to the periodic translation transformation:
\begin{align}
    \forall t\in \R,\quad f(w_{[-\pi,\pi)}(x+t)|w_{[-\pi,\pi)}(m+t),c)&=f(x+2\pi k'|m+2\pi k+t,c)\nonumber\\
    &=\frac{\exp(c\cos(x+2\pi k'-(m+2\pi k+t)))}{2\pi I_0(c)}\nonumber\\
    &=\frac{\exp(c\cos((x-m))}{2\pi I_0(c)}\nonumber\\
    &=f(x|m,c) \label{eq:vm_period_equi}
\end{align}
The differential entropy of von Mises distribution with mean direction parameter $m$ and concentration parameter $c$ is
\begin{equation}\label{eq:vm_entropy}
    H(vM(x|m,c))=-c\frac{I_1(c)}{I_0(c)}+\ln[2\pi I_0(c)]\quad\text{\citep{mardia2009directional}}
\end{equation}

We opt for von Mises distribution rather than the wrapped normal distribution used in \citet{jiao2023crystal,jing2022torsional} mainly because of the Bayesian conjugacy of von Mises distribution the posterior of which is conjugate to the prior distribution if the likelihood is parameterized as von Mises distribution which is the fundamental basis of constructing a Bayesian flow. Interestingly, there is an intriguing connection between von Mises distribution and crystal force field that the von Mises distribution is the stationary distribution of a drift and diffusion process on the circle in a harmonic potential corresponding to the harmonic force field of crystals \citep{risken1996fokker}.
\subsection{Input Distribution \texorpdfstring{$p_I(\ \cdot\ |\bm{\theta})$}{} and Sender Distribution \texorpdfstring{$p_S(\vy | \vx;\alpha$)}{}}
For circular data $\bold{x}$ which locates in a quotient space $\mathbb{T}^n=\mathbb{R}^{3\times N}/[-\pi,\pi)^{3\times N}$, we define the input distribution of the Bayesian Flow Networks as independently factorized von Mises distribution over the interval $[-\pi,\pi)$.
\begin{equation}
    \btheta \defeq \{\bold{m},\bold{c}\}
\end{equation}
\begin{equation}
    p_I( \bold{x} | \theta)\defeq\Pi_{d=1}^D vM(x^{(d)}|m^{(d)},c^{(d)})
\end{equation}
% where $\theta=(\theta^{(1)},...,\theta^{(D)})$ with $\theta^{(d)}=\{m^{(d)},c^{(d)}\}$ 
where $m^{(d)}\in [-\pi,\pi)$ and $c^{(d)}\in [0,\infty)$.

In this paper, to ensuring the periodic translational invariance, the prior parameter of \modelname's Bayesian flow is chosed as 
\begin{equation}
p(\bthetaF_0) \defeq \{vM(\vm_0|\vec{0}_{3\times N},\vec{0}_{3\times N}),\delta(\vc_0-\vec{0}_{3\times N})\} = \{U(\vec{0},\vec{1}),\delta(\vc_0-\vec{0}_{3\times N})\}
\end{equation}
where $\bold{0}$ is the length of $D$ vector whose entries are all $0$. Note that this input prior $\btheta_0$ defines a multivariate uniform distribution
\begin{equation}
    p_I( \bold{x} | \btheta_0)=\Pi_{d=1}^D vM(x^{(d)}|0,0)=\Pi_{d=1}^D U(-\pi,\pi)
\end{equation}

which ensures \textit{periodic E(3) invariance} of the prior distribution. 

The sender space $\mathcal{Y}$ is identical to the data space $\mathcal{X}$ for circular data. And the sender distribution is von Mises distribution centered on $\vx$ with concentration parameter $\alpha$ represented by
\begin{equation}
    p_S (\bold{y} |\bold{x};\alpha) =\Pi_{d=1}^{D}vM(y^{(d)}|x^{(d)},\alpha):=vM(\bold{y}|\bold{x},\alpha)
\end{equation}
\subsection{Bayesian Update Function \texorpdfstring{$h(\parsnt{i-1}, \y, \alpha)$}{} and Bayesian Update Distribution \texorpdfstring{$\update(\cdot \mid \parsn, \x; \alpha)$}{}}\label{apdx:bayesian_update_function}
For the receiver, given his last univariate belief parameterized by von Mises distribution with parameter $\theta_{i-1}=\{m_{i-1},\ c_{i-1}\}$, he now observes a sample $y$ from sender distribution with unknown $x$ and known $\alpha$. Now, by Bayesian theorem,
% \quad (p(y) \text{ is constant for }p(x))
\begin{align}
    p(x|y;\alpha,m_{i-1},c_{i-1})&=\frac{p(y|x;\alpha)p(x;m_{i-1},c_{i-1})}{p(y)}\\
    &\propto p(y|x;\alpha)p(x;m_{i-1},c_{i-1})\\
    &= vM(y|x,\alpha)vM(x|m_{i-1},c_{i-1})\\
    &\propto \exp\{{\alpha \cos(x-y)+c_{i-1}\cos(x-m_{i-1})}\} 
\end{align}
The last expression has the form of a von Mises distribution in $x$ and hence:
\begin{equation}
    p(x|y;\alpha,m_{i-1},c_{i-1})=vM(x;m_i,c_i)
\end{equation}
where 
\begin{equation}
m_i=\text{atan2}(\alpha\sin y+c_{i-1}\sin m_{i-1}, {\alpha\cos y+c_{i-1}\cos m_{i-1}})
\end{equation}
\begin{equation}
c_i =\sqrt{\alpha^2+c_{i-1}^2+2\alpha c_{i-1}\cos(y-m_{i-1})}
\end{equation}
We refer readers interested in more detailed deduction to \citet{mardia1976bayesian,guttorp1988finding}. Defining the notation for scaler $x$ in circular space as $\dotx\defeq[\cos x,\sin x]^T$, these two expressions will be much simpler, intuitive and more similar to the Gaussian form:
\begin{equation}
    h(\{\dotm_{i-1},c\},\doty,\alpha)=\{\dotm_i,c_i \}
\end{equation}
where
\begin{equation}\label{eq:update_mi}
\dotm_{i}=\frac{\alpha \doty+c_{i-1}\dotm_{i-1}}{c_i}
\end{equation}
\begin{equation}\label{eq:update_ci}
c_i=||\alpha\doty+c_{i-1}\dotm_{i-1}||_2
\end{equation}

 The Bayesian update distribution $\update(\cdot \mid \parsn, \x; \alpha)$ could be obtained by marginalizing $\y$:
\begin{equation}
p_U(\theta'|\theta,\bold{x};\alpha)=\mathbb{E}_{p_S(\bold{y}|\bold{x};\alpha)}\delta(\theta'-h(\theta,\bold{y},\alpha))=\mathbb{E}_{vM(\bold{y}|\bold{x},\alpha)}\delta(\theta'-h(\theta,\bold{y},\alpha))
\end{equation}
% Optionally include supplemental material (complete proofs, additional experiments and plots) in appendix.
% All such materials \textbf{SHOULD be included in the main submission.}

\subsection{\textit{Non-Additive} Accuracy Issue }
Although all cases including continuous and discrete data are proven to enjoy the so-called additive accuracy property considered in \citet{bfn} defined as:
\begin{align}
\update(\parsn'' \mid \parsn, \x; \alpha_a+\alpha_b) = \E_{\update(\parsn' \mid \parsn, \x; \alpha_a)} \update(\parsn'' \mid \parsn', \x; \alpha_b)\label{eq:additive}.
\end{align}
this property does not hold for von Mises distribution. The untenability of this property for von Mises distribution can be checked out by considering two steps Bayesian updates with prior $\btheta = \{\bold{0},\bold{0}\}$, $\alpha_a,\alpha_b, \y_a, \y_b$,
\begin{align}
    & \update(c'' \mid \parsn, \x; \alpha_a+\alpha_b)  = \delta(c-\alpha_a-\alpha_b) \\
     \ne & \mathbb{E}_{p_U(\parsn' \mid \parsn, \x; \alpha_a)} \update(c'' \mid \parsn', \x; \alpha_b) = \mathbb{E}_{vM(\bold{y}|\bold{x},\alpha_a)}\mathbb{E}_{vM(\bold{y}|\bold{x},\alpha_b)}\delta(c-||\alpha_a \y_a+\alpha_b \y_b||)
\end{align}
Consequently, the Bayesian flow distribution does not equal to one-step Bayesian update distribution with $\beta(t)$:
\begin{align}
\flow(\parsn \mid \x ; t) \ne \update(\parsn \mid \parsnt{0}, \x; \beta(t))\label{param_flow_dist}.
\end{align}
With an accuracy schedule $\alpha_1,\alpha_2,\dots,\alpha_n$ and $\beta(t)=\sum_{j=1}^{t_i} \alpha_j$, this untenability will cause the incongruity between sender's accumulated accuracy $\beta(t)$ and the confidence of receiver's belief $c_i$ over his location parameter $m_i$. Hence we should differentiate the sender's accuracy schedule $\alpha_i$ and the receiver's belief confidence$c_i$. 
% \begin{align}
% \E_{\update(\parsnt{1}\mid\parsnt{0},\x;\alphat{1})}\E_{\update(\parsnt{2}\mid\parsnt{1},\x;\alphat{2})}\dots\E_{\update(\parsnt{n-1}\mid\parsnt{n-2},\x;\alphat{n-1})}\update(\parsnt{n} \mid \parsnt{n-1},\x;\alphat{n} ) = \update\left(\parsnt{n} \mid \parsnt{0}, \x; \sum_{i=1}^n \alphat{i}\right).
% \label{updateseq}
% \end{align}
And $c_i$ is no longer a function but a \textit{distribution} over $t_i$. In consequence, we should define the Bayesian flow distribution parameterized by received sender's accuracies $\alpha_1,\alpha_2,\dots,\alpha_i$ rather than $t$. Furthermore, the information of receiver confidence $c_i$ should be part of the network input as well. 

\subsection{Bayesian Flow Distribution \texorpdfstring{$p_F(\btheta|\x;\alpha_1,\alpha_2,\dots,\alpha_i)$}{}  and Sender Accuracy Schedule \texorpdfstring{$\alpha_i$}{}}
\begin{equation}\label[equation]{eq:cirflow1}
    p_F(\btheta|\x;\alpha_1,\alpha_2,\dots,\alpha_i)=\E_{\update(\parsnt{1} \mid \parsnt{0}, \x ; \alphat{1})}\dots\E_{\update(\parsn \mid \parsnt{i-1}, \x ; \alphat{i})} \update(\parsnt | \parsnt{i-1},\x;\alphat{i} )
\end{equation}
The original definition of Bayesian flow distribution in \cref{eq:cirflow1} provides an iterative algorithm to sample from $p_F$ but which practically is slow resulting the training unaffordable. In fact, noticing the "additive" property of $c_i\dotm_i$ by \cref{eq:update_mi,eq:update_ci}, we can sample from $p_F$ without iteration:
\begin{equation}\label{eq:appd_cirflow_equiv}
p_F(\vec{m}|\x;\alpha_1,\alpha_2,\dots,\alpha_i)=\E_{vM(\y_1|\x,\alpha_1)}\dots\E_{vM(\y_i|\x,\alpha_i)} \delta(\vec{m}-\text{atan2}(\sum_{j=1}^i \alpha_i \cos \y_i,\sum_{j=1}^i \alpha_i \sin \y_i))
\end{equation}
\begin{equation}\label{eq:appd_cirflow_equiv2}
p_F(\vec{c}|\x;\alpha_1,\alpha_2,\dots,\alpha_i)=\E_{vM(\y_1|\x,\alpha_1)}\dots\E_{vM(\y_i|\x,\alpha_i)} \delta(\vec{c}-||(\sum_{j=1}^i \alpha_i \cos \y_i,\sum_{j=1}^i \alpha_i \sin \y_i)||_2)
\end{equation}
\cref{eq:appd_cirflow_equiv,eq:appd_cirflow_equiv2} provides an algorithm allowing sampling from $p_F$ by pure tensor operations without simulating the flow iteratively. Next, we can define the entropy of the receiver's belief as $H(t)$:
\begin{align}
    H(t)&\defeq \E_{p_F(\btheta|\x;\alpha_1,\alpha_2,\dots,\alpha_i)}H(p_I(~\cdot|\btheta))\\
&=\E_{p_F(c_i|\x;\alpha_1,\alpha_2,\dots,\alpha_i)}-c_i\frac{I_1(c_i)}{I_0(c_i)}+\ln[2\pi I_0(c_i)], \text{where}~i=nt
\end{align}
To ensure the information coherence between modalities, we choose to find a sender accuracy schedule to make the receiver's belief entropy $H(t)$ linearly decrease with predefined $c_n$. Formally, we would like to find an accuracy schedule $\alpha_i$ such that
\begin{equation}\label{eq:linear_entropy}
    H(t)=(1-t)H(0)+tH(1)=\ln 2\pi -c_n\frac{I_1(c_n)}{I_0(c_n)}+\ln[2\pi I_0(c_n)]
\end{equation}
Note that \cref{eq:linear_entropy} can not be solved analytically but we can solve it numerically by firstly getting the target sender's accumulated accuracy $\beta(t)$ via binary search due to the monotonicity of \cref{eq:vm_entropy}. Next, we could iteratively search $\alpha_i$ from $i=1$ to $i=n$ by matching the average accuracy toward $c_i$. This process could be done only once and the resultant $\alpha_i$ can be cached for each pre-confirmed hyper-parameter $(c_n, n)$ pair.

\subsection{Output Distribution \texorpdfstring{$\out(\cdot \mid \parsn;t)$}{} and Receiver Distribution \texorpdfstring{$\rec(\cdot \mid \parsn; \alpha,t)$}{}}
Given samples $\btheta=\{\vm,\vc\}$ from Bayesian flow distribution as input, the receiver uses network output $\net(\btheta,t)$ to rebuild his belief over ground truth $\vx$ termed output distribution. Following \citet{bfn}, we parameterize $\out$ using $\bold{\hat{x}}(\boldsymbol\theta,t)=\net(\btheta,t)$ to be $\delta$ prediction of $\x$:
\begin{equation}
    p_O(\bold{x}|\boldsymbol\theta;t)=\delta(\bold{x}-\bold{\hat{x}}(\boldsymbol\theta,t))
\end{equation}
Therefore, the receiver distribution is:
\begin{equation}
p_R(\bold{y}|\boldsymbol\theta;\alpha,t)=\mathbb{E}_{p_O(\bold{x}'|\theta;t)}p_S(\bold{y}|\bold{x}';\alpha)=vM(\bold{y}|\hat{\bold{x}}(\boldsymbol\theta,t),\alpha)
\end{equation}

\subsection{Discrete-Time Loss \texorpdfstring{$L^{n}(\x)$}{}}\label{appd:cir_loss}
From \citet{kitagawa2022kldvm}, the KL divergence between $vM(m_1,c_1)$ and $vM(m_2,c_2)$ is
\begin{equation}\label{eq:kld_vm}
    D_{KL}(vM(m_1,c_1)||vM(m_2,c_2))=-\ln\frac{I_0(c_1)}{I_0(c_2)}+\frac{I_1(c_1)}{I_0(c_1)}(c_1\dotm_1-c_2\dotm_2)'\dotm_1
\end{equation}
From \cref{eq:loss_n}, the discrete-time loss for circular data is
\begin{align}\label{disc_t_loss_exp}
L^{n}(\x) &= n \E_{i \sim \ui{n}, \flow(\parsn \mid \x ; \senderacc)} \kl{\sender{\cdot}{\x ; \alphat{i}}}{\rec(\cdot \mid \parsn; t_{i-1}, \alphat{i})}\\
&=n \E_{i \sim \ui{n}, \flow(\parsn \mid \x ; \senderacc)} \frac{I_1(\alpha_i)}{I_0(\alpha_i)}\alpha_i(1-\cos(\bold{x}-\hat{\bold{x}}(\theta_{i-1},t_{i-1}))
\end{align}
Continuous-time loss is not tractable because the Bayesian flow distribution is not analytical due to the non-additive accuracy property.

\section{Proof of Propositions}\label{appd:geometric_invar}
In this section, we first prove the crystal geometric invariance of \modelname. Crystals remain the same under transformations including permutation, orthogonal transformation, and periodic translation defined as follows:
% permutation variance
\begin{definition} [Permutation Invariance \citep{jiao2023crystal}]
\label{De:pi}
For any permutation matrix $\mP$, $p(\mL,\mF,\mA)=p(\mL,\mF\mP, \mA\mP)$, \emph{i.e.}, changing the order of atoms will not change the distribution.  
% \oyyw{这里$S_N$没有解释。我这里建议这里就不要用公式了,文字描述就能很清晰,而且后续也不会用到这些公式了。}
\end{definition}
% O(3)
\begin{definition} [O(3) Invariance \citep{jiao2023crystal}]
\label{De:oi}
For any orthogonal transformation $\mQ\in\R^{3\times 3}$ satisfying $\mQ^\top\mQ=\mI$, $p(\mQ\mL,\mF,\mA)=p(\mL,\mF,\mA)$, namely, any rotation/reflection of $\mL$ keeps the distribution unchanged.   
\end{definition}
% periodic translation
\begin{definition}[Periodic Translation Invariance \citep{jiao2023crystal}] 
\label{De:PTI}
For any translation $\vt\in\R^{3\times1}$, $p(\mL, w(\mF + \vt\vone^\top),\mA)=p(\mL, \mF ,\mA)$, where the function $w(\mF)=\mF - \lfloor\mF\rfloor \in [0,1)^{3\times N}$ returns the fractional part of each element in $\mF$, and $\vone\in\R^{3\times1}$ is a vector with all elements set to one. It explains that any periodic translation of $\mF$ will not change the distribution.  
\end{definition}
The combination of the above invariances is abbreviated in a compact manner termed \textit{periodic E(3) invariance} proposed by \citet{jiao2023crystal}. The permutational invariance can be easily achieved by using GNN frameworks. Periodic translation and rotation are both space group transformations. We first introduce the basic concept of \emph{G-invariant}.
\begin{definition}
\label{def:ei}
We call a distribution $p(x)$ is $G$-invariant if for any transformation $g$ in the group $G$, $p(g\cdot x) = p(x)$, and a conditional distribution $p(x|c)$ is G-equivariant if $p(g\cdot x|g\cdot c) = p(x|c), \forall g\in G$.
\end{definition}
With a lemma from \citet{xu2021geodiff}, we can prove a Markov-process-generated distribution \emph{G-invariant} by proving the G-invariance of the prior distribution and the G-equivariance of every transition kernel.
\begin{lemma}[\citet{xu2021geodiff}]
\label{lm:mrkv}
Consider the generation Markov process $p(\theta_n) =\int  p(\theta_0)p(\theta_{n:1}|\theta_0)d\theta_{1:n}$. If the prior distribution $p(\theta_0)$ is G-invariant and the Markov transitions $p(\theta_{t+1}|\theta_t), 0 \leq t\leq n-1$ are G-equivariant, the marginal distribution $p(\theta_n)$ is also G-invariant.
\end{lemma}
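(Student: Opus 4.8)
The plan is to prove \cref{lm:mrkv} by induction on the number of generation steps, using only two ingredients beyond the stated hypotheses: the Markov property, which lets the marginal law be propagated one transition at a time, and the fact that each relevant $g\in G$ (atom permutations, orthogonal maps on the lattice parameters, periodic translations on the fractional-coordinate parameters, leaving the uncertainty parameters fixed) acts on the parameter space as a measure-preserving bijection, so that the substitution $\theta\mapsto g\cdot\theta$ inside an integral carries unit Jacobian.

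First I would write $p_t$ for the law of $\theta_t$ along the generative chain, so that $p_0=p(\theta_0)$ and, by the Markov property, $p_{t+1}(\theta_{t+1})=\int p(\theta_{t+1}\mid\theta_t)\,p_t(\theta_t)\,d\theta_t$ for $0\le t\le n-1$. The base case is the assumed $G$-invariance of the prior $p_0$. For the inductive step, assume $p_t$ is $G$-invariant in the sense of \cref{def:ei}; then for any $g\in G$ and any $\theta_{t+1}$,
\begin{align*}
p_{t+1}(g\cdot\theta_{t+1})
&=\int p(g\cdot\theta_{t+1}\mid\theta_t)\,p_t(\theta_t)\,d\theta_t\\
&=\int p(g\cdot\theta_{t+1}\mid g\cdot\theta_t')\,p_t(g\cdot\theta_t')\,d\theta_t'\\
&=\int p(\theta_{t+1}\mid\theta_t')\,p_t(\theta_t')\,d\theta_t'
=p_{t+1}(\theta_{t+1}),
\end{align*}
where the second line substitutes $\theta_t=g\cdot\theta_t'$ (unit Jacobian) and the third applies $G$-equivariance of the transition kernel to the conditional and the inductive hypothesis to $p_t$. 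Hence $p_{t+1}$ is $G$-invariant, and iterating up to $t=n$ gives the claim.

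The only step requiring genuine care is the change of variables, i.e.\ confirming that the $G$-action in use is volume-preserving on the concrete parameter manifold: atom permutations are a relabelling of coordinates; periodic translations act on $m^F$ as rigid rotations of each circle $\mathbb{T}^1$ with $c^F$ untouched, hence measure-preserving; and an orthogonal $\vQ$ (with $\vQ^T\vQ=\vI$) satisfies $|\det\vQ|=1$ on $\mu^L$ while leaving $\rho^L$ invariant. The remaining, purely bookkeeping point is to verify that \cref{lm:mrkv} is applied with the product $G$-action on $\thetaM=\{\bthetaA,\bthetaL,\bthetaF\}$ under which the prior was shown invariant and each transition kernel --- \cref{eq:transi_frac}, \cref{eq:lattice_sampling}, and the atom-type update --- was shown equivariant in the companion propositions; granting those per-kernel facts, the induction above applies verbatim to $p(\mathcal{M}_n)$ and yields \emph{periodic E(3) invariance} of the generated distribution. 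I expect the per-kernel equivariance checks (carried out elsewhere in the appendix) to be the more laborious part, with the Markov-chain induction itself being short.
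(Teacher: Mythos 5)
Your proof is correct and takes essentially the same approach as the paper: the paper performs the unit-Jacobian substitution $\theta_{0:n}\mapsto g\cdot\theta_{0:n}$ on the whole trajectory at once and telescopes through the product of equivariant kernels, whereas you unroll the identical argument as an induction on the one-step marginals. Your explicit check that each concrete group action (permutation, orthogonal map, periodic translation) is measure-preserving is a point the paper leaves implicit, but it does not change the substance of the argument.
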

\begin{proof}
\begin{align*}
\forall g\in G,\quad p(g\cdot \theta_n) & = p(g\cdot \theta_0)\int p(\theta_{n:1}|\theta_0)d\theta_{1:n}\\
    & = p(g\cdot \theta_0)\int\prod_{t=0}^{n-1} p(g\cdot \theta_{t+1}|g\cdot \theta_{t})d\theta_{1:n}\\
    & = p(\theta_0)\int\prod_{t=0}^{n-1}p(g\cdot \theta_{t+1}|g\cdot \theta_t)d\theta_{1:n}\\
    & = p(\theta_0)\int\prod_{t=0}^{n-1}p(\theta_{t+1}|\theta_t)d\theta_{1:n}\\
    & = p(\theta_0)\int p(\theta_0)p(\theta_{n:1}|\theta_0)d\theta_{1:n}\\
    & = p(\theta_n).
\end{align*}
Therefore, the marginal distribution $p(\theta_n)$ is G-invariant.
\end{proof}
With \cref{lm:mrkv}, we can prove the following propositions mentioned in the main text:
\latticeinv*
% \begin{proof}
%     The prior is O(3) invariant since $p(\vmu^L_0)=\delta(\vmu-\Vec{0})=\delta(\vQ\vmu-\Vec{0}),\forall \vQ^T\vQ=\vI$.\\
%     The transition probability of
%     \begin{align*}
%     &p(\vQ\bmuL_{i}|\vQ\bmuL_{i-1},\parsnt{i-1}^A,\parsnt{i-1}^F)\\
%     =&\update^L(\vQ\bmuL_{i}|\hat{\net}_L(\vQ\bmuL_{i-1},\parsnt{i-1}^A,\parsnt{i-1}^F, t_{i-1}))\\
%     =&\mathcal{N}(\vQ\bmuL_{i}|\gamma(t)\hat{\net}_L(\vQ\bmuL_{i-1},\parsnt{i-1}^A,\parsnt{i-1}^F, t_{i-1},\gamma(t)(1-\gamma(t))\bm{I}) \\
%     =&\mathcal{N}(\vQ\bmuL_{i}|\gamma(t)\vQ\hat{\net}_L(\bmuL_{i-1},\parsnt{i-1}^A,\parsnt{i-1}^F, t_{i-1},\gamma(t)(1-\gamma(t))\bm{I})\quad (\text{by equivariance of }\net^L)\\
%     =&\mathcal{N}(\bmuL_{i}|\gamma(t)\hat{\net}_L(\bmuL_{i-1},\parsnt{i-1}^A,\parsnt{i-1}^F, t_{i-1},\gamma(t)(1-\gamma(t))\bm{I}) \quad (\text{by property of isotropic Normal})\\
%     =&\flow^L(\bmuL_{i}|\hat{\net}_L(\bmuL_{i-1},\parsnt{i-1}^A,\parsnt{i-1}^F, t_{i-1}))\\    
%     =&p(\bmuL_{i}|\bmuL_{i-1},\parsnt{i-1}^A,\parsnt{i-1}^F)\\
%     \end{align*}
%     is equivariant. By \cref{lm:mrkv}, the marginal distribution $p(\bmuL_n)$ is O(3)-invariant.
% \end{proof}
\begin{proof}
    The prior is O(3) invariant since $p(\vmu^L_0)=\delta(\vmu-\Vec{0})=\delta(\vQ\vmu-\Vec{0}),\forall \vQ^T\vQ=\vI$.\\
    The transition probability of
    \begin{align*}
    &p(\vQ\bmuL_{i}|\vQ\bmuL_{i-1},\parsnt{i-1}^A,\parsnt{i-1}^F)\\
    =&\update^L(\vQ\bmuL_{i}|\vQ\bmuL_{i-1},\hat{\net}_L(\vQ\bmuL_{i-1},\cdot),t_{i-1})\\
    =&\mathcal{N}(\vQ\bmuL_{i}|\frac{\alpha\hat{\net}_L(\vQ\bmuL_{i-1},\cdot)+\vQ\bmuL_{i-1}\rho_{i-1}}{\rho_{i}},\frac{\alpha}{\rho_i^2}\vI) \\
    =&\mathcal{N}(\vQ\bmuL_{i}|\frac{\alpha\vQ\hat{\net}_L(\bmuL_{i-1},\cdot)+\vQ\bmuL_{i-1}\rho_{i-1}}{\rho_{i}},\frac{\alpha}{\rho_i^2}\vI) \quad (\text{by equivariance of }\net^L)\\
    =&\mathcal{N}(\vQ\bmuL_{i}|\vQ\frac{\alpha\hat{\net}_L(\bmuL_{i-1},\cdot)+\bmuL_{i-1}\rho_{i-1}}{\rho_{i}},\frac{\alpha}{\rho_i^2}\vI)\\
    =&\mathcal{N}(\bmuL_{i}|\frac{\alpha\hat{\net}_L(\bmuL_{i-1},\cdot)+\bmuL_{i-1}\rho_{i-1}}{\rho_{i}},\frac{\alpha}{\rho_i^2}\vI)\quad (\text{by property of isotropic Normal p.d.f})\\
    =&\update^L(\bmuL_{i}|\bmuL_{i-1},\hat{\net}_L(\bmuL_{i-1},\cdot),t_{i-1})\\
    =&p(\bmuL_{i}|\bmuL_{i-1},\parsnt{i-1}^A,\parsnt{i-1}^F)\\
    \end{align*}
    is equivariant. By \cref{lm:mrkv}, the marginal distribution $p(\bmuL_n)$ is O(3)-invariant.
\end{proof}
\fracinv*
\begin{proof}
We first prove that the Bayesian update is periodic translational equivariant. Based on \cref{eq:update_mi,eq:update_ci}, we can interpret the Bayesian update of $\{\dotm_{i-1},c_{i-1}\}$ observing $\doty$ with $\alpha$, as the \textit{vector addition} between $\dotm_{i-1}$ and $\doty$ with weight $c_{i-1}$ and $\alpha$. And the periodic translation $t$ for $x$ corresponds to the rotation of $\dotx$ with angle $t$:
 \begin{align*}
    % & w(x+t)=x+t+k,\exists k\in\mathbb{Z}\\
     % \Leftrightarrow &
     &[\cos(x+t+2\pi k),\sin(x+t+2\pi k)]^T=[\cos(x+t),\sin(x+t)]^T\\
     =&[\cos x\cos t-\sin x\sin t,\sin x\cos t +\cos x\sin t]^T\\
     =&\begin{bmatrix}
\cos t & -\sin t \\
\sin t & \cos t
\end{bmatrix}\begin{bmatrix}
\cos x \\
\sin x 
\end{bmatrix}=\vec{R}_t\dot{\vx}
 \end{align*}
 where $\vec{R}_t$ is the 2-dimensional rotation matrix with angle $t$. Due to the rotational equivariance of 2D vector addition, we can infer that the Bayesian update function $h$ is periodic translational equivariant:
\begin{align}\label{eq:equiv_bu}
&h(\{w(m_{i-1}+t),c_{i-1}\},w(y+t),\alpha)=h(\{\vR_t\dotm_{i-1},c_{i-1}\},\vR_t\doty,\alpha)\nonumber\\
=&\{\frac{\alpha \vR_t\doty+c_{i-1}\vR_t\dotm_{i-1}}{||\alpha\vR_t\doty+c_{i-1}\vR_t\dotm_{i-1}||_2},||\alpha\vR_t\doty+c_{i-1}\vR_t\dotm_{i-1}||_2\}\nonumber\\
=&\{\frac{\alpha \vR_t\doty+c_{i-1}\vR_t\dotm_{i-1}}{||\alpha\doty+c_{i-1}\dotm_{i-1}||_2},||\alpha\doty+c_{i-1}\dotm_{i-1}||_2\}\nonumber\\
 =&\{\vR_t\dotm_i,c_i \}=\{w(m_{i}+t),c_{i}\}
\end{align}

    The prior is periodic translation invariant because $\bmF\sim U(\vec{0},\vec{1})$.\\
    We prove that the Bayesian update distribution $p_U(\bmF_{i}|\bmF_{i-1},\bcF_{i-1},g(\bmF_{i-1});\alpha)$ is periodic translation equivariant if $\net^F$ is periodic translation equivariant:
    \begin{align*}
    &p_U(w(\bmF_{i}+\vt)|w(\bmF_{i-1}+\vt),\bcF_{i-1},\net^F(w(\bmF_{i-1}+\vt));\alpha)\\
    =&\mathbb{E}_{vM(\bold{y}|\net^F(w(\bmF_{i-1}+\vt)),\alpha)}\delta(w(\bmF_{i}+\vt)-h(w(\bmF_{i-1}+\vt),\bcF_{i-1},\bold{y},\alpha))\\
    =&\mathbb{E}_{vM(\bold{y}|w(\net^F(\bmF_{i-1})+\vt),\alpha)}\delta(w(\bmF_{i}+\vt)-h(w(\bmF_{i-1}+\vt),\bcF_{i-1},\bold{y},\alpha))\text{(by equiv.} \net^F\text{)}\\
    =&\mathbb{E}_{vM(w(\bold{y}+\vt)|w(\net^F(\bmF_{i-1})+\vt),\alpha)}\delta(w(\bmF_{i}+\vt)-h(w(\bmF_{i-1}+\vt),\bcF_{i-1},w(\bold{y}+\vt),\alpha))\\
    =&\mathbb{E}_{vM(\y|\net^F(\bmF_{i-1})),\alpha)}\delta(w(\bmF_{i}+\vt)-h(w(\bmF_{i-1}+\vt),\bcF_{i-1},w(\bold{y}+\vt),\alpha))\text{(by \cref{eq:vm_period_equi})}\\
    =&\mathbb{E}_{vM(\y|\net^F(\bmF_{i-1})),\alpha)}\delta(w(\bmF_{i}+\vt)-w(h(\bmF_{i-1},\bcF_{i-1},\bold{y},\alpha)+\vt))\text{(by \cref{eq:equiv_bu})}\\
    =&\mathbb{E}_{vM(\y|\net^F(\bmF_{i-1})),\alpha)}\delta(\bmF_{i}-h(\bmF_{i-1},\bcF_{i-1},\bold{y},\alpha))  \text{(by equivariance of }\delta \text{ function)}\\
    =&p_U(\bmF_{i}|\bmF_{i-1},\bcF_{i-1},\net^F(\bmF_{i-1});\alpha)\\
    \end{align*}    
    % from here
    % The transition probability of 
    % \begin{align*}
    % &p(w(\bmF_{i}+\vt)|w(\bmF_{i-1}+\vt),\bcF_{i-1},\parsnt{i-1}^A,\parsnt{i-1}^L)\\
    % =&p_F(w(\bmF_{i}+\vt)|\net^F(w(\bmF_{i-1}+\vt),\bcF_{i-1},\parsnt{i-1}^A,\parsnt{i-1}^L);\alpha_1,\alpha_2,\dots,\alpha_i)\\
    % % =&p_F(w(\bmF_{i}+\vt)|w(\net^F(\bmF_{i-1},\bcF_{i-1},\parsnt{i-1}^A,\parsnt{i-1}^L)+\vt);\alpha_1,\alpha_2,\dots,\alpha_i)\quad (\text{by equivariance of }\net^F)\\
    % =&\E_{\update(\bmF_1 \mid \bmF_0, \net^F(w(\bmF_{i-1}+\vt) ; \alphat{1})}\dots\E_{\update(\bmF_{i-1} \mid \bmF_{i-2}, \net^F(w(\bmF_0+\vt) ; \alphat{i})} \update(\bmF| \bmF_{i-1},\net^F(w(\bmF_{i-1}+\vt);\alphat{i} )\\
    % =&\E_{\update(w(\bmF_1+\vt) \mid w(\bmF_0+\vt), \net^F(w(\bmF_{0}+\vt) ; \alphat{1})}\dots\E_{w(\update(\bmF_{i-1}+\vt) \mid w(\bmF_{i-2}+\vt), \net^F(w(\bmF_0+\vt) ; \alphat{i})} \\
    % &\hspace{2in}\update(w(\bmF+\vt)| w(\bmF_{i-2}+\vt),\net^F(w(\bmF_{i-1}+\vt);\alphat{i} )\\
    % =&\E_{\update(\bmF_1 \mid \bmF_0, \net^F(\bmF_{0} ; \alphat{1})}\dots\E_{\update(\bmF_{i-1} \mid \bmF_{i-2}, \net^F(w(\bmF_0+\vt) ; \alphat{i})} \update(\bmF|\bmF_{i-2},\net^F(\bmF_{i-1};\alphat{i} )\\
    % =&p_F(\bmF_{i}|\net^F(\bmF_{i-1},\bcF_{i-1},\parsnt{i-1}^A,\parsnt{i-1}^L);\alpha_1,\alpha_2,\dots,\alpha_i)\\
    % =&p(\bmF_{i}|\bmF_{i-1},\bcF_{i-1},\parsnt{i-1}^A,\parsnt{i-1}^L)\\
    % \end{align*}
    % is equivariant. By \cref{lm:mrkv}, the marginal distribution $p(\bmF_n)$ is periodic translation invariant.
\end{proof}

Next, we prove the following proposition:
\cirflowequiv*
\begin{proof}
Combining \cref{eq:update_mi,eq:update_ci},
\begin{align*}
\dotm_{i}\vec{c}_i&=\alpha_i \doty_i+\vec{c}_{i-1}\dotm_{i-1}\\
&=\alpha_i \doty_i + \alpha_{i-1} \doty_{i-1}+\vec{c}_{i-2}\dotm_{i-2}\\
&=\alpha_i \doty_i + \dots + \alpha_{1} \doty_{1}+\vec{c}_0 \dotm_{0}\\
&=\sum_{j=1}^i \alpha_i \doty_i=[\sum_{j=1}^i \alpha_i\cos \y_i,\sum_{j=1}^i \alpha_i\sin \y_i]^T
\end{align*}
Taking the 2-norm to each side,
\begin{align*}
    ||\dotm_{i}\vec{c}_i||_2&=||[\sum_{j=1}^i \alpha_i\cos \y_i,\sum_{j=1}^i \alpha_i\sin \y_i]^T||_2\\
    \vec{c}_i&=||[\sum_{j=1}^i \alpha_i \cos \y_i,\sum_{j=1}^i \alpha_i \sin \y_i]^T||_2
\end{align*}
The vector direction of $\dotm_{i}$ is irrelevant to the scaler $\vec{c}_i$. Therefore,
\begin{equation}
    \dotm_{i}=\text{atan2}(\sum_{j=1}^i \alpha_i \cos \y_i,\sum_{j=1}^i \alpha_i \sin \y_i)
\end{equation}

Hence,
\begin{align*}
% p_F(\btheta_i|\x;\alpha_1,\alpha_2,\dots,\alpha_i)=\\
% \E_{vM(\y_1|\x,\alpha_1)}\dots\E_{vM(\y_i|\x,\alpha_i)} \delta(\vec{m}_i-\text{atan2}(\sum_{j=1}^i \alpha_i \cos \y_i,\sum_{j=1}^i \alpha_i \sin \y_i))
&p_F(\vec{m}_i|\x;\alpha_1,\alpha_2,\dots,\alpha_i)\\=&\E_{\update(\parsnt{1} \mid \parsnt{0}, \x ; \alphat{1})}\dots\E_{\update(\parsn_{i-1} \mid \parsnt{i-2}, \x; \alphat{i-1})} \update(\vm_{i} | \parsnt{i-1},\x;\alphat{i} ) \\
=& \E_{vM(\y_1|\x,\alpha_1)}\dots\E_{vM(\y_i|\x,\alpha_i)} \delta(\vec{m}_{i}-\text{atan2}(\sum_{j=1}^i \alpha_i \cos \y_i,\sum_{j=1}^i \alpha_i \sin \y_i))\\
&p_F(\vec{c}_i|\x;\alpha_1,\alpha_2,\dots,\alpha_i)\\=&\E_{\update(\parsnt{1} \mid \parsnt{0}, \x ; \alphat{1})}\dots\E_{\update(\parsn_{i-1} \mid \parsnt{i-2}, \x; \alphat{i-1})} \update(\vm_{i} | \parsnt{i-1},\x;\alphat{i} ) \\
=& \E_{vM(\y_1|\x,\alpha_1)}\dots\E_{vM(\y_i|\x,\alpha_i)} \delta(\vec{c}_{i}-||[\sum_{j=1}^i \alpha_i\cos \y_i,\sum_{j=1}^i \alpha_i\sin \y_i]^T||_2)
\end{align*}
\end{proof}

% \section{Connection Between Denoising Objective and Learning a Harmonic Force Field}
% Representing the infinite Cartesian crystal atom coordinates as $\vec{X}\in \R^{3\times \infty}$, the crystal structure is determined and the previously mentioned $\vL$ can be inferred from $\vec{X}$. We can see a crystal structure $\vS=(\vX,\vL)$ as a random scaler sampled from the Boltzmann distribution $\pphysics(\vS) \propto \exp(-E(\vS))$, where $E(\vS)$ is the potential energy of $\vS$. The training dataset can be seen as a set of equilibrium structures $\mathcal{D}_{data}=\{\vS_1,\dots, \vS_{l}\}$. Since $\pphysics$ is intractable, we can only approximate it by the empirical distribution $q_0(\x) = \frac{1}{n} \sum_{i = 1}^n \delta(\x = \x_i)$

\section{Implementation Details}\label{appd:imple_details}

\textbf{Training and Sampling Procedure} We provide the training and sampling procedure in \cref{alg:train} and in \cref{alg:sampling}. 

\textbf{Network Architecture} We use CSPNet proposed by \citet{jiao2023crystal} with minor modifications: $(1)$ We add a residual connection from the input to the output of fractional coordinates ensuring the equivariance of the network:
\begin{equation}
\net^F(\bthetaA_i,\bthetaF_i,\bthetaL_i,t_i) = w(\varphi_F(\vh_i^{(S)})+\bthetaF_i),
\end{equation}
By the periodic translational invariance of $\varphi_F(\vh_i^{(S)})$ proved by \citet{jiao2023crystal}, the equivariance of $\predF{i}$ can be easily checked:
\begin{align*}
   \net^F(\bthetaA_i,w(\bthetaF_i+\vt),\bthetaL_i,t_i)= &w(\varphi_F(\vh_i^{(S)})+w(\bthetaF_i+\vt)) \\
   =& w(w(\varphi_F(\vh_i^{(S)})+\bthetaF_i)+\vt) \\
   =& w(\net^F(\bthetaA_i,\bthetaF_i,\bthetaL_i,t_i)+\vt)
\end{align*}

$(2)$We alter the frequency of the Fourier transformation features to model in the interval $[-\pi,\pi)$ with length $2\pi$. $(3)$The concentration parameter of each fractional coordinate $\bcF$ is taken logarithm, normalized and concatenated to time embedding. The network hyper-parameters follows the setting of \citet{jiao2023crystal} including the number of hidden states and layers. 

\textbf{Hyper-parameters} For the network, the CSPNet has 6 layers, 512 hidden states, 128 frequencies for the Fourier feature for each task and dataset following \citep{jiao2023crystal}. For BFN hyper-parameters, we set $\sigma_1^2=0.001$ for continuous variable generation, $\beta_1=1000$ for circular variables generation across all datasets and tasks. For discrete variables, we set $\beta_1=0.4$ for the MP-20 dataset and $\beta_1=3.0$ for the Perov-5 dataset. The number of steps is searched in $\{50,100,500,1000,2000\}$. For optimizations, we apply an AdamW optimizer with an initial learning rate $1\times10^-3$ and a plateau scheduler with a decaying factor of 0.6, a patience of 100 epochs, and a minimal learning rate $1\times10^{-4}$. The weight of every loss is $5\times10^{-2}$. The network is trained for 4000, 5000, 1500, and 1000 epochs for Perov-5, Carbon-24, MP-20, and MPTS-52 respectively.

\textbf{Computational Resources} All training experiments are conducted on a server with 8 $\times$ NVidia RTX 3090 GPU, 64 $\times$ Intel Xeon Platinum 8362 CPU and 256GB memory. Each training task requires one GPU. We also report the required GPU hour across methods to converge in our experimental environment in \cref{tab:train_gpu_hour}.

\begin{table*}[h!]
% \vskip -0.2in
% \renewcommand\arraystretch{0.85}
  \centering
  \caption{Comparison of GPU hours required for training across different methods.}
  \resizebox{0.6\linewidth}{!}{
  \small
    \setlength{\tabcolsep}{2.5pt}
    \begin{tabular}{lccccccc}
    \toprule
    \textbf{GPU Hour} & Perov-5  &  MP-20 & MPTS-52 \\
    \midrule
    DiffCSP~\cite{jiao2023crystal}    & 8.59 & 92.22 & 10.42 \\
    FlowMM~\cite{flowmm}    & 16.36 & 106.37 & 16.49 \\
    \modelname    & 10.19  & 85.71 & 12.31 \\
    \bottomrule
    \end{tabular}
    }
\vskip -0.15in
 \label{tab:train_gpu_hour}%
\end{table*}%

\section{More Results}
% \textbf{Efficiency experiments}. Using the CSP task on MP-20 dataset, We alter the number of steps and study the effect of different Number of Function Evaluations (NFE) \emph{i.e.} number of network forward passes.
% \rebuttal{
\textbf{Visualizations} Here we give visualizations of the ab-initio generated structures from CrysBFN and DiffCSP in \cref{fig:vis_apd}. We also provide a gif animation of the generation process in our code repository \url{https://github.com/wu-han-lin/CrysBFN}.
% \clearpage
\begin{figure*}[h!]
    \centering    \includegraphics[width=\textwidth]{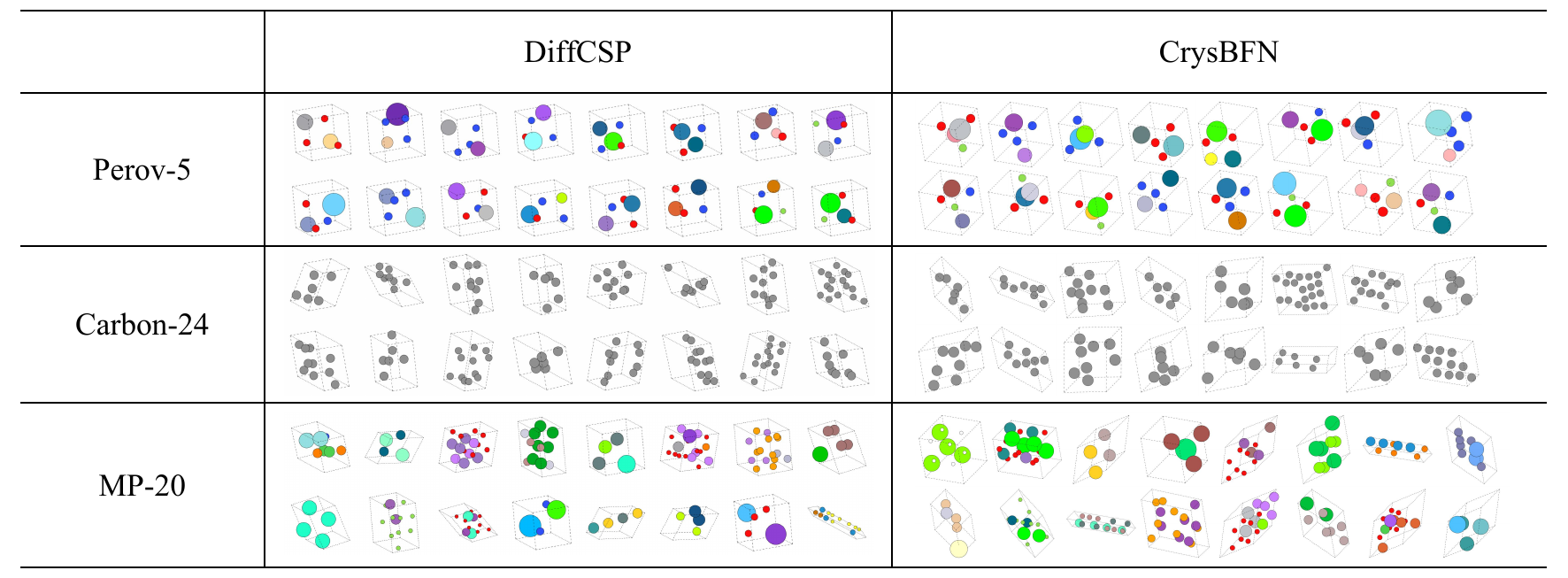}
    \caption{Visualizations comparison of the ab-initio generated structures from CrysBFN and DiffCSP.}
    \label{fig:vis_apd}
\end{figure*}

\textbf{Error Bars} We report the error bar for the crystal structure prediction task following \citet{jiao2023crystal} in \cref{tab:std} running three experiments with different random seeds. The results are similar to \cref{tab:oto}.
\begin{table*}[h]
  \centering
  \caption{Results on Perov-5 and MP-20 with error bars. }
  \small
    \setlength{\tabcolsep}{2.5pt}
    \begin{tabular}{p{4cm}ccccc}
    \toprule
     &  \multicolumn{2}{c}{Perov-5 } & \multicolumn{2}{c}{MP-20} \\
             & Match rate (\%)$\uparrow$ & RMSE$\downarrow$ & Match rate (\%)$\uparrow$ & RMSE$\downarrow$  \\
    \midrule
    {CDVAE~\citep{xie2021crystal}}& 45.31$\pm$0.49  & 0.1123$\pm$0.0026 &  33.93$\pm$0.15  & 0.1069$\pm$0.0018  \\
    \midrule
    DiffCSP~\citep{jiao2023crystal}  & 52.35$\pm$0.26  & 0.0778$\pm$0.0030 & 51.89$\pm$0.30 &	0.0611$\pm$0.0015 \\
    \midrule
    \modelname  & \textbf{54.58$\pm$0.13}  & \textbf{0.0691$\pm$0.0011} & \textbf{64.33$\pm$0.24} &	\textbf{0.0445$\pm$0.0010} \\
    \bottomrule
    \end{tabular}%
    % }
  \label{tab:std}%
\end{table*}%

\textbf{Uniqueness, Novelty, and Stability} Here we compare the uniqueness, novelty, and stability of ab initio generated samples across methods on MP-20. Using StructureMatcher in pymatgen with default parameters, a generated crystal is considered: 1) unique if it does not match any other generated samples; and 2) novel if it does not match any crystals in the training set, following prior practices \citep{zeni2023mattergen,flowmm}. The stability evaluation procedure follows the approach in \citet{crysllm}. Finally, a sample is considered stable, unique, and novel (S.U.N.) if it satisfies all three conditions. The results are reported in \cref{tab:sun}.
% Specifically, we use M3GNet~\citep{m3gnet} for pre-relaxations and perform DFT~\citep{hafner2008ab} calculations only on meta-stable samples predicted by M3GNet ($\hat{E}_{\text{hull}} < 0.1$) and regard it as stable if $E_{\text{hull}}^{\text{DFT}}<0$. This is because DFT calculations are significantly more computationally expensive. Finally, a sample is considered stable, unique, and novel (S.U.N.) if it satisfies all three conditions. The results are reported in \cref{tab:sun}.}
\begin{table*}[h]
  \centering
  \caption{Uniqueness, novelty, and stability experimental results comparison on ab initio generation task on MP-20 dataset.}
  \small
    \setlength{\tabcolsep}{2.5pt}
    \begin{tabular}{p{4cm}cccccc}
    \toprule
    \textbf{Method}& \textbf{Unique} / \% & \textbf{Novel} / \% & \textbf{Metastable} / \% & \textbf{Stable} / \% & \textbf{S.U.N. Rate} / \% \\
    \midrule
    % {CDVAE~\citep{xie2021crystal}}      &  28.8  & 5.4 \\ 
    % \midrule
    DiffCSP~\citep{jiao2023crystal}    & \textbf{96.11} & 90.95 & 37.91 & 12.16 & 9.44 \\
    \midrule
    FlowMM~\citep{flowmm}    & 94.79 &	91.63 & 32.77 & 9.23 & 8.31 \\
    \midrule
    \modelname  & 95.29 & \textbf{92.37} & \textbf{45.91} & \textbf{15.82} & \textbf{12.16}\\
    \bottomrule
    \end{tabular}%
    % }
  \label{tab:sun}%
\end{table*}%

\begin{table*}[t!]
% \vskip -0.2in
% \renewcommand\arraystretch{0.85}
 %  
  \centering
  \caption{Ablation study results of entropy-conditioning mechanism across datasets.}
  \resizebox{0.90\linewidth}{!}{
  \small
    \setlength{\tabcolsep}{2.5pt}
    \begin{tabular}{lccccccc}
    \toprule
    \multirow{2}[2]{*}{} & \multicolumn{2}{c}{Perov-5 } &  \multicolumn{2}{c}{MP-20} & \multicolumn{2}{c}{MPTS-52} \\
          &  Match rate$\uparrow$ & RMSE$\downarrow$   & Match rate$\uparrow$ & RMSE$\downarrow$ & Match rate$\uparrow$ & RMSE$\downarrow$ \\
    \midrule
    w/o entropy conditioning    & 51.33 & 0.0753 & 52.16 & 0.0631 & 13.41 & 0.1547 \\
    \modelname    & \textbf{54.69}  & \textbf{0.0636} & \textbf{64.35} &	\textbf{0.0433} & \textbf{20.52} & \textbf{0.1038} \\
    % Relative Improvement / \% & 6.55 & 15.54 & 23.37 & 31.38 & 53.02 & 32.90\\
     % & 20 & \textbf{98.64} & \textbf{0.0116} &\textbf{80.72} & \textbf{0.0473} & \textbf{36.86} & \textbf{0.1596} \\
    \bottomrule
    \end{tabular}
    }
% \vskip -0.15in
 \label{tab:entropy_condition_datasets}%
\end{table*}%

% \clearpage
\section{Sampling Efficiency Comparison to ODE Samplers}
\begin{figure}[h!] % 图片靠右，宽度为文本宽度的40%
% \vskip -0.4in
    \centering
    \includegraphics[width=0.5\textwidth]{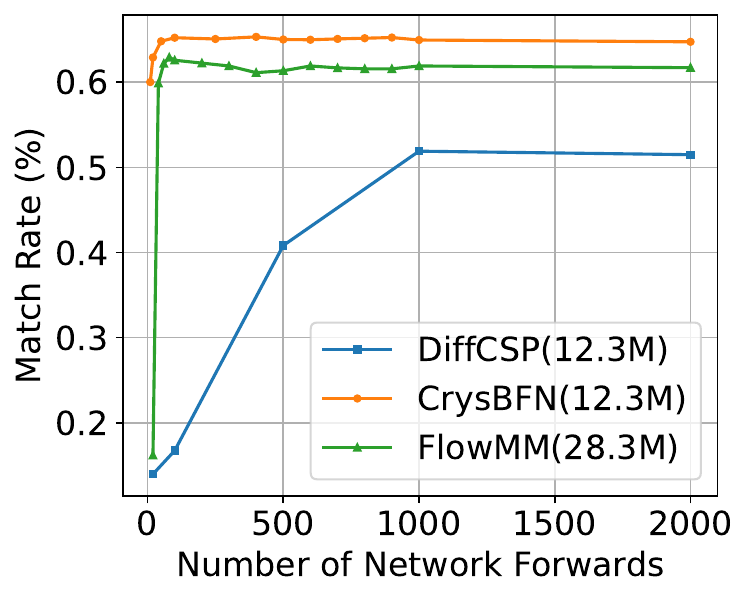} 
    \caption{Experimental results on MP-20 with different Number of Function Evaluations (NFE) \emph{i.e.} number of network forward passes including FlowMM. Note that FlowMM has a larger parameter size resulting in a less fair comparison.} % 图像标题
    \label{fig:ode_compare}
\vskip -0.2in
\end{figure}
% We do not include FlowMM in \cref{fig:eff_compare}. due to its larger parameter size which makes the comparison based on NFE unfair. Nonetheless, w
We present the comparison among them in \cref{fig:ode_compare} and find that FlowMM fails in extremely small NFE (20 steps) settings with only 16.18\% match rate, while CrysBFN enjoys 60.02\% match rate with 10 steps and consistently achieves the best sampling quality. 

\section{Detailed Discussion of Related Works}
\label{appd:detailed_related_works}
Discovering new functional materials has been a long-standing scientific problem. Recently, data-driven approaches have been seen as a promising solution to address this challenge ~\cite{peng2022human}.

\textbf{Two-stage crystal generation methods based on implicit crystal representations} One line of approaches indirectly generates crystals in the implicit representation space. Prior practices includes transforming crystals into human-designed fingerprint FTCP~\citep{REN2021}, 3D voxels images~\citep{hoffmann2019data}, 2D images~\citep{noh2019inverse}, 3D electron-density maps~\citep{court20203}, video-like representation~\citep{yang2023scalable}, embedded atom density \citep{zhang2019embedded} in StructRepDiff~\citep{sinha2024representation}. However, their generation quality is hampered by the encoding and reconstruction processes, which may not be fully reversible or fail to respect physical symmetries such as rotational and translational invariance. For example, 3D voxel grids ~\citep{hoffmann2019data} and 3D density maps ~\citep{court20203} are invariant to periodic transformations but not to \(E(3)\) transformations~\citep{zhang2023artificial}, and the video-like representation~\citep{yang2023scalable} is not invariant to permutation, rotation, translation, and periodic transformations. 

\textbf{Direct crystal generation methods} Direct material generation in sample space could bypass the above reversibility problem. Prior works \citep{nouira2018crystalgan,kim2020generative} employ Generative Adversarial Networks \citep{goodfellow2020generative} to generate crystal structures, while their methods fail to respect crystal geometric invariance. Inspired by the success of Diffusion models on images~\citep{ho2020denoising,song2020score,song2019generative}, the multi-step generation paradigm has been introduced into generative modeling of atom systems including molecular conformations~\citep{xu2023geometric}. The geometric invariance of the generation path can be guaranteed by designing a Markov chain with an invariant prior and equivariant transitions \citep{xu2021geodiff}. CDVAE~\citep{xie2021crystal}, its CSP adaptation Cond-CDVAE~\citep{luo2024deep} and SyMat~\citep{luo2024towards}, generate crystalline materials levearging $E(3)$-equivariant graph neural network models~\citep{klipfel2023unified,gasteiger2021gemnet} on 3D multi-edge graphs. Utilizing VAE models, they generate lattice parameters, randomly initialize atom coordinates, and iteratively refine these coordinates using score-matching models~\cite{song2019generative}. With the fractional coordinate system, DiffCSP \citep{jiao2023crystal} firstly introduces the periodic E(3) equivariance of crystals and designs an equivariant diffusion crystal generation model based on periodic diffusion \citep{jing2022torsional}. Subsequently, FlowMM \citep{flowmm} recently introduced Riemannian Flow Matching \citep{riemannianfm} for the task of crystal generation, offering improved sampling efficiency, albeit at the expense of quality.

We argue that such struggles of balancing between sampling quality and efficiency stems from the lack of proper guidance on each transition from noise prior to data distribution especially for crystals, where thermodynamically stable materials represent
only a small fraction in the search space~\citep{flowmm}. For example, early generation states $x_{t-1}$ with low confidence should be preserved less than the later ones to get the next state $x_{t}$. From the perspective of Bayesian updates, Bayesian Flow Networks~\citep{bfn} provides a framework to accurately update each $m_{t-1}$ according to its confidence parameter $\alpha_i$, the effectiveness of which has been proved in \citet{song2023unified}. However, periodicity is not considered in \citet{bfn} and incorporating it into BFN is non-trivial without desirable distributions with mathematical properties like Gaussian. To address above issues, in this paper, we build a Bayesian flow almost from scratch, identifying and tackling the non-additive accuracy via introducing a novel entropy conditioning mechanism, theoretical reformulations of BFN, a fast sampling algorithm, \emph{etc}. We demonstrate the effectiveness of the guidance of entropy in \cref{tab:exp_abl} and \cref{tab:entropy_condition_datasets} and its higher sampling effiency and quality in \cref{fig:eff_compare,fig:ode_compare}.

 Additionally, recently various techniques have also been introduced to boost the performance tailored for crystal generation considering crystal's inductive bias, including \citet{jiao2024space,cao2024space,ai4science2023crystal} which incorporate the space group constraint into the generation process. Recently, EquiCSP~\citep{equicsp} proposed to utilize a periodic CoM-free noising method and introduce lattice permutation invariance loss. Those techniques are orthogonal to the proposed method in this paper.

% \clearpage
% \newpage

 \begin{algorithm}[h]
\small
\caption{Training Procedure}\label{alg:train}
\begin{algorithmic}[1]
\STATE \textbf{Require:} number of steps $n\in\R$, $\sigma_1\in\R^{+},\beta_1\in\R^{+},\alpha_1^F,\dots,\alpha_n^F\in\R^{+}$. 

\STATE \textbf{Input:}  atom types $\vec{A}$, fractional coordinates $\vF$, lattice parameter $\vL$, length of vocabulary $K$
\STATE Sample $i\sim U\{1,n\}$, $t\leftarrow \frac{(i-1)}{n}$
\STATE \textcolor{gray}{\# sampling from Bayesian flow distribution of lattice}
\STATE $\gamma^L \leftarrow 1-\sigma_1^{2t}$
\STATE $\vmu_L\sim\mathcal{N}(\gamma^L \vL,\gamma^L(1-\gamma^L)\vI)$
\STATE \textcolor{gray}{\# sampling from Bayesian flow distribution of atom type}
\STATE $\beta^A \leftarrow \beta_1 t^2$
\STATE $\y_A' \sim \N{\beta\left(K\oh{\x}{KD}-
\1{KD}\right)}{\beta K\vI}$
\STATE $\parsn^A \gets \text{softmax}(\y_A')$
\STATE \textcolor{gray}{\# sampling from Bayesian flow distribution of fractional coordinates}
\STATE $\y_1,\y_2,\dots,\y_i \gets vM(\vF,\alpha_1^F),\dots vM(\vF,\alpha_i^F)$
\STATE $\vec{m}_{i}\gets\text{atan2}(\sum_{j=1}^i \alpha_j \cos \y_j,\sum_{j=1}^i \alpha_j \sin \y_j)$
\STATE \textcolor{gray}{\# calculate the accumulated accuracy \textit{i.e.} entropy}
\STATE $\vec{c}_{i}\gets ||[\sum_{j=1}^i \alpha_j\cos \y_j,\sum_{j=1}^i \alpha_j\sin \y_j]^T||_2$
\STATE \textcolor{gray}{\# use network to do inter-dependency modeling across dimensions conditioning on entropy}
\STATE $\bthetaM\gets (\vmu_L,\parsn^A,\vec{m}_{i},\vec{c}_{i})$
\STATE $\predL{i-1},\predF{i-1},\predF{i-1} \gets \net(\bthetaM,t)$ 
\STATE \textcolor{gray}{\# calculate the losses of all modalities }
\STATE $\alpha^A_i \leftarrow \beta_1^A\left(\frac{2i -1}{n^2}\right)$
\STATE $\y_A \sim \N{\alpha\left(K\oh{\x}{KD}-
\1{KD}\right)}{\alpha K\vI}$
\STATE $\calL_A \gets n\ln \N{\y_A \mid \alphat{i}^A\left(K \oh{\vA}{K\times N} - \vec{1}\right)}{\alphat{i}^A K \vec{I}}$
\STATE\qquad\qquad$-\sum_{d=1}^N \ln \left(\sum_{k=1}^K \out^{(d)}(k \mid \parsn^A; t_{i-1}) \N{\ydd{d}_A \mid \alphat{i}^A\left(K\oh{k}{K}- \vec{1}\right)}{\alphat{i}^A K \vec{I}}\right)$
\STATE $\calL_F \gets n~\alpha_i^F\frac{I_1(\alpha_i^F)}{I_0(\alpha_i^F)}(1-\cos(\vF-\predF{i-1}))$
\STATE $\mathcal{L}_{L} = \frac{n}{2}\left(1-\sigma_1^{2/n}\right)\frac{\left\|\vL -\predL{i-1}\right\|^2}{\sigma_1^{2i/n}}$
\STATE Minimize $\calL_A+\calL_F+\calL_L$
\end{algorithmic}
\end{algorithm}

\begin{algorithm}[h]
\small
\caption{Sampling Procedure}\label{alg:sampling}
\begin{algorithmic}
\STATE \textbf{Require:} number of steps $n\in\R$, length of vocabulary $K$, $\sigma_1\in\R^{+},\beta_1\in\R^{+},\alpha_1^F,\dots,\alpha_n^F\in\R^{+}$
\STATE \textcolor{gray}{\# initialize the prior parameters }
\STATE $\vmu_0$,~$\rho_0\gets \vec{1}$
\STATE $\vec{\theta}_0\gets\frac{1}{K}\vec{1}_{1\times N}$
\STATE $\vm_0\gets U(\vec{0},\vec{1}),\vc_0\gets \vec{0}$
% \COMMENT{\text{feed previous updated belief into network and get output}}
\FOR{$i \gets 1,\cdots, n$}
    \STATE{$t\gets\frac{i-1}{n}$}
    \STATE \textcolor{gray}{\# use network to do inter-dependency modeling across dimensions of all modalities }
    \STATE{$\bthetaM\gets (\vmu_{i-1},\parsn_{i-1},\vec{m}_{i-1},\vec{c}_{i-1})$}
    \STATE{$\predL{i-1},\predF{i-1},\predF{i-1} \gets \net(\bthetaM,t)$}
    \IF{$i<n$}
    \STATE \textcolor{gray}{\# do Bayesian update for lattice parameter }    
    \STATE{$\alpha_i^L\gets \sigma_1^{-2i/n}(1-\sigma_1^{2/n})$}
    \STATE{$\y^L\sim \mathcal{N}(\predL{i-1},\frac{1}{\alpha_i^L}\vI)$}
    \STATE{$\vmu_i\gets \frac{\rho_{i-1}\vmu_{i-1}+\alpha_i^L\y^L}{\rho_{i-1}+\alpha_i^L}$}
    \STATE{$\rho_i\gets \rho_{i-1}+\alpha_i^L$}
    \STATE \textcolor{gray}{\# do Bayesian update for fractional coordinates }   
    \STATE{$\y^F\sim vM(\predF{i-1},\alpha_i^F)$}
    \STATE{$\vm_i\gets \text{atan2}(\alpha_i^F\sin \y+\vc_{i-1}\sin \vm_{i-1}, {\alpha_i^F\cos \y+\vc_{i-1}\cos \vm_{i-1}})$}
    \STATE{$\vc_i\gets ||[\alpha_i^F\sin \y+\vc_{i-1}\sin \vm_{i-1}, {\alpha_i^F\cos \y+\vc_{i-1}\cos \vm_{i-1}}]^T||_2$}
    \STATE \textcolor{gray}{\# do Bayesian update for atom types }   
    \STATE $\alpha_i^A \leftarrow \beta_1\left(\frac{2i -1}{n^2}\right)$
    \STATE $\y^A \sim \N{\alpha_i^A\left(K\oh{\k}{KD}-
    \vec{1}\right)}{\alpha_i^A K\vI}$
    \STATE $\parsn' \gets e^{\y^A} \parsn_{i-1}$
    \STATE $\parsn_i \gets \frac{\parsn'}{\sum_k \parsn'_k}$
    \ENDIF
\ENDFOR
\STATE $\hat{\vec{A}}\sim \predA{n-1}$ \textcolor{gray}{\# sample from the final probability prediction}   
\STATE \textbf{Return} $\hat{\vec{A}},\predF{n-1},\predL{n-1}$
\end{algorithmic}
\end{algorithm}

% \appendix
% \section{Appendix}
% You may include other additional sections here.

\end{document}